\newcommand{\pr}{\mathrm{pr}}
\newcommand{\id}{\mathrm{id}}
\newcommand{\IN}{\mathbb{N}}
\newcommand{\Ssolvable}[1]{{#1}\text{-}\mathrm{solv}}
\newcommand{\preserves}{\mathrel{\triangleright}}
\DeclareMathOperator{\Pol}{Pol}
\DeclareMathOperator{\Inv}{Inv}
\newcommand{\clAll}{\mathsf{\Omega}}
\newcommand{\clL}{\mathsf{L}}
\newcommand{\clOmegaOne}{\mathsf{\Omega(1)}}
\newcommand{\clI}{\mathsf{I}}
\newcommand{\clC}{\mathsf{C}}
\newcommand{\clIneg}{\mathsf{N}}
\newcommand{\clgen}[1]{\langle{#1}\rangle}
\newcommand{\nset}[1]{\ensuremath{[{#1}]}}
\newcommand{\vect}[1]{\ensuremath{\mathbf{#1}}}
\theoremstyle{plain}%
\newtheorem{theorem}{Theorem}%  meant for continuous numbers
\newtheorem{proposition}[theorem]{Proposition}% 
\newtheorem{lemma}[theorem]{Lemma}
\newtheorem{fact}[theorem]{Fact}%
\theoremstyle{definition}%
\newtheorem{example}[theorem]{Example}%
\newtheorem{definition}[theorem]{Definition}%
\begin{document}

\title[Galois theory for analogical classifiers]{Galois theory for analogical classifiers}

%%=============================================================%%
%% Prefix	-> \pfx{Dr}
%% GivenName	-> \fnm{Joergen W.}
%% Particle	-> \spfx{van der} -> surname prefix
%% FamilyName	-> \sur{Ploeg}
%% Suffix	-> \sfx{IV}
%% NatureName	-> \tanm{Poet Laureate} -> Title after name
%% Degrees	-> \dgr{MSc, PhD}
%% \author*[1,2]{\pfx{Dr} \fnm{Joergen W.} \spfx{van der} \sur{Ploeg} \sfx{IV} \tanm{Poet Laureate} 
%%                 \dgr{MSc, PhD}}\email{iauthor@gmail.com}
%%=============================================================%%
\author{Miguel Couceiro}
\address[M. Couceiro]%
   {Universit\'e de Lorraine, CNRS, LORIA \\
    F-54000 Nancy \\
    France}

\author{Erkko Lehtonen}
\address[E. Lehtonen]%
   {Centro de Matem\'atica e Aplica\c{c}\~oes \\
    Faculdade de Ci\^encias e Tecnologia \\
    Universidade Nova de Lisboa \\
    Quinta da Torre \\
    2829-516 Caparica \\
    Portugal}
%%==================================%%
%% sample for unstructured abstract %%
%%==================================%%

\begin{abstract}Analogical proportions are 4-ary relations that read ``A is to B as C is to D''. Recent works have highlighted the fact that such relations can support a specific form of inference, called analogical inference. This inference mechanism was empirically proved to be efficient in several reasoning  and classification tasks. 
In the latter case, it relies on the notion of analogy preservation.  

In this paper, we explore this relation between formal models of analogy and the corresponding classes of analogy preserving functions, and we establish a Galois theory of analogical classifiers. We illustrate the usefulness of this Galois framework over Boolean domains, and  we explicitly determine the closed sets of analogical classifiers, i.e., classifiers that are compatible with the analogical inference, for each pair of Boolean analogies. 
\end{abstract}

\keywords{Analogical proportion, analogical reasoning, analogical classifier, Galois theory}

%%\pacs[JEL Classification]{D8, H51}

%%\pacs[MSC Classification]{35A01, 65L10, 65L12, 65L20, 65L70}

\maketitle

\section{Introduction and motivation}\label{sec1}

Analogical reasoning (AR) is a remarkable capability of human thought that exploits parallels between situations of different nature to infer plausible conclusions, by relying simultaneously on similarities and dissimilarities. 
Machine learning (ML) and artificial intelligence (AI) have tried to develop AR, mostly based on cognitive considerations, and to integrate it in a variety of ML tasks, such as natural language processing (NLP), preference learning and recommendation \cite{alsaidia,FahandarH18,FahandarH21,Mitchell21}. 
Also, analogical extrapolation (inference) can solve difficult reasoning tasks such as {\it scholastic aptitude tests} and {\it visual question answering} \cite{SadeghiZF15,PeyreSLS19}. Inference based on AR can also support dataset augmentation (analogical extension and extrapolation) for model learning, especially in environments with few labeled examples \cite{CouceiroHPR17}. Furthermore, AR can also be performed at a meta level for transfer learning \cite{CornuejolsMO20,alsaidi21b} where the idea is to take advantage of what has been learned on a source domain in order to improve the learning process in a target domain related to the source domain.
Moreover, analogy making can provide useful explanations that rely on the parallel example-counterexample \cite{Hullermeier20} and guide counterfactual generation \cite{KeaneS20}.

However, early works lacked theoretical and formalizational support. The situation started to change about a decade ago when researchers adopted the view of analogical proportions as statements of the form ``$a$ relates to $b$ as $c$ relates to $d$'', usually denoted $a:b::c:d$. 
%Formally, they can be expressed by $E(R(A,B),R(C,D))$, where $E$ denotes the ``as'' part and where $R$ can be instantiated in many different ways depending on the underlying representation of $A,B,C$ and $D$, or on the domain of interpretation.
Such proportions are at the root of the analogical inference mechanism, and several formalisms to study this mechanism have been proposed, which follow different axiomatic and logical approaches \cite{Lepage01,MicletBD08}. 
% After: For instance, Lepage \cite{Lepage01} considered 
%Describe quickly different sets of postulates unified into a single 
For instance, Lepage \cite{Lepage03} introduces the following 4 postulates in the linguistic
context as a guideline for formal models of analogical proportions: {\it symmetry} (if $a:b::c:d$, then $c:d::a:b$),
{\it central permutation}  (if $a:b::c:d$, then $a:c::b:d$),  {\it strong inner reflexivity} (if $a:a::c:d$, then $d = c$), and
{\it strong reflexivity} (if $a:b::a:d$, then $d = b$).
Such postulates appear reasonable in the word domain, but they can be criticized in other application domains. For instance, in a setting where two distinct conceptual spaces are involved, as in $\mathit{wine} : \mathit{French} :: \mathit{beer} : \mathit{Belgian}$ where two different spaces ``drinks'' and ``nationality'' are considered, the central permutation is not tolerable.
% Recently, Antic \cite{Antic2020} proposed an algebraic framework of analogies that is  naturally embedded into first-order logic via model-theoretic types. It provides a unifying setting where the different axiomatic approaches in the literature and respective domains of interpretation  can be considered. Note that different axiomatic approaches entail different dataset augmentation procedures.

%There are two basic tasks associated with AR. The first is \emph{analogy making} that corresponds to the task of detecting and deciding whether a quadruple $(a, b, c, d)$ corresponds to a valid analogical proportion. 
%The second is \emph{analogy solving} that refers to the task of finding or extrapolating, for a given triple $a,b,c$ the value $x$ such that $a : b :: c : x$ is a valid analogy.
%This task is typically addressed in the literature by retrieval and adaptation, i.e., defining an  $x$ from a pool of retrieved candidate solutions to be suitably adapted. 
A key task associated with AR is \emph{analogy solving}, i.e. finding or extrapolating, for a given triple $a,b,c$ a value $x$ such that $a : b :: c : x$ is a valid analogy. 
In fact, this task can be seen as central in case-based reasoning (CBR)~\cite{richter2016case}. Given a set $\mathcal{P}$ of problems, a set $\mathcal{S}$ of solutions and a set $\mathit{CB}$ of problem-solution tuples $(x,y)\in \mathcal{P} \times \mathcal{S}$, called \emph{cases}, the CBR task consists in finding a solution $y_t \in \mathcal{S}$ to a given target problem $x_{t} \in \mathcal{P}$. 
The CBR methodology splits this problem into several steps, the two most important being
(1) \emph{retrieval:} select $k$ ``relevant" source cases in the case-base~$\mathit{CB}$ according to some criteria related to the target problem; and
(2) \emph{adaptation}: reuse the $k$ retrieved cases for proposing a solution to the target problem. 
The adaptation step obviously depends on the number of retrieved source cases.
For $k=1$, the desired solution $y_t$ corresponds to the solution of the analogical equation $x:y::x^t:y^t$. For higher values of $k$, different models of analogy on $\mathcal{P}$ and $\mathcal{S}$ could be taken into account. For instance, when $k=3$, the retrieval task consists in finding a triple of cases $(x^1, y^1)$,  $(x^2, y^2)$ and $(x^3, y^3)$, such that $x^1:x^2::x^3:x^t$ is valid and such that $y^1:y^2::y^3:y$ is solvable in $y$~\cite{LieberNP21}. In this setting the desired $y^t$ would then be one of such solutions.

The latter idea was extended to analogy based classification \cite{BounhasPR17a} where objects are viewed as attribute tuples (instances) $\mathbf{x}=(x_1,\ldots,x_n)$. 
Similarly, if $\mathbf{a},\mathbf{b},\mathbf{c}$ are in analogical proportion for most of their attributes, and class labels are known for $\mathbf{a},\mathbf{b},\mathbf{c}$ but unknown for $\mathbf{d}$, then one may infer the label for $\mathbf{d}$ as a solution of an analogical proportion equation. 
All these applications rely on the same idea: if four instances $\mathbf{a},\mathbf{b},\mathbf{c},\mathbf{d}$ are in analogical proportion for most of the attributes describing them, then it may still be the case for the other attributes $f(\mathbf{a}), f(\mathbf{b}), f(\mathbf{c}), f(\mathbf{d})$ (for some function $f$).
This principle is called \emph{analogical inference principle} (AIP).

Theoretically, it is quite challenging to find and characterize situations where AIP can be soundly applied. 
A first step toward explaining the analogical mechanism consists in characterizing the set of functions~$f$ for which AIP is sound ({\it i.e.}, no error occurs) no matter which triplets of examples are used. 
%In case of Boolean attributes, a first step for explaining the analogical mechanism was to characterize the set of functions for which AIP is sound ({\it i.e.}, no error occurs) no matter which triplets of examples are used. 
In case of Boolean attributes and for the model of proportional analogy, it was shown in~\cite{CouceiroHPR17} that these so-called ``analogy-preserving'' (AP) functions coincide exactly with the set of affine Boolean functions. 
Moreover, it was also shown that, when the function is not affine, the prediction accuracy remains high if the function is close to being affine~\cite{CouceiroHPR18}. In fact, it was shown  that if $f$ is $\varepsilon$-approximately affine (i.e., $f$ is at Hamming distance at most $\varepsilon$ from the class of affine functions), then
the $f$'s average error rate is at most $ 4 \varepsilon$.
These results were recently extended in \cite{CouceiroLMPR20} to nominal (finite) underlying sets when taking the {\it minimal model} of analogy, i.e., only patterns of the form $x:x::y:y$ and $x:y::x:y$, in both the domain and codomain of classifiers. 

Intuitively, this class will change when adopting different models of analogy. In this paper, we investigate the relation between formal models of analogy and the corresponding class of AP functions. In this paper we establish a formal correspondence between them describe a {\it Galois theory of analogical classifiers}.
Moreover, we revisit Anti\'c's analogical formalism in the Boolean case and, for each pair of formal models of analogy, we explicitly determine the corresponding closed sets of analogical classifiers.

\section{Galois theories for functions}\label{sec2}

Let $A$ and $B$ be nonempty sets.
A \emph{function of several arguments} from $A$ to $B$ is a mapping $f \colon A^n \to B$ for some natural number $n$ called the \emph{arity} of $f$.
Denote by $\mathcal{F}_{AB}^{(n)}$ the set of all $n$-ary functions of several arguments from $A$ to $B$, and let $\mathcal{F}_{AB} := \bigcup_{n \in \IN} \mathcal{F}_{AB}^{(n)}$.
In the case when $A = B$ we speak of \emph{operations} on $A$, and we use the notation $\mathcal{O}_A^{(n)} := \mathcal{F}_{AA}^{(n)}$ and $\mathcal{O}_A := \mathcal{F}_{AA}$.
For any set $C \subseteq \mathcal{F}_{AB}$, the \emph{$n$-ary part} of $C$ is $C^{(n)} := C \cap \mathcal{F}_{AB}^{(n)}$.

If $f \in \mathcal{F}_{BC}^{(n)}$ and $g_1, \dots, g_n \in \mathcal{F}_{AB}^{(m)}$, then the \emph{composition} $f(g_1, \dots, g_n)$ belongs to $\mathcal{F}_{AC}^{(m)}$ and is defined by the rule
\[
f(g_1, \dots, g_n)(\mathbf{a}) := f(g_1(\mathbf{a}), \dots, g_n(\mathbf{a}))
\quad\text{for all $\mathbf{a} \in A^m$.}
\]
The $i$-th $n$-ary \emph{projection} $\pr_i^{(n)} \in \mathcal{O}_A^{(n)}$ is defined by $\pr_i^{(n)}(a_1, \dots, a_n) := a_i$ for all $a_1, \dots, a_n \in A$.
We denote by $\mathcal{J}_A$ the set of all projections on $A$.

The notion of functional composition can be extended to sets of functions as follows.
Let $C \subseteq \mathcal{F}_{BC}$ and $K \subseteq \mathcal{F}_{AB}$.
The \emph{composition} of $C$ with $K$ is the set
\begin{multline*}
CK := \\
\{ h \in \mathcal{F}_{AC} \mid \exists m, n \in \IN, \, f \in C^{(n)}, \, g_1, \dots, g_n \in K^{(m)}, \, h = f(g_1, \dots, g_n) \}.
\end{multline*}

A \emph{clone} on $A$ is a set $C \subseteq \mathcal{O}_A$ that is closed under composition and contains all projections, in symbols, $C C \subseteq C$ and $\mathcal{J}_A \subseteq C$.
For $F \subseteq \mathcal{O}_A$, we denote by $\clgen{F}$ the clone generated by $F$, i.e., the smallest clone on $A$ containing $F$.

Let $f \in \mathcal{F}_{AB}^{(n)}$ and $g \in \mathcal{F}_{AB}^{(m)}$.
We say that $f$ is a \emph{minor} of $g$, and we write $f \leq g$, if $f \in \{g\} \mathcal{J}_A$, or, equivalently, there exists a $\sigma \colon \{1, \dots, m\} \to \{1, \dots, n\}$ such that
\[
f(a_1, \dots, a_n) = g(a_{\sigma(1)}, \dots, a_{\sigma(m)})
\quad\text{for all $a_1, \dots, a_n \in A$.}
\]
The minor relation $\leq$ is a quasi-order (a reflexive and transitive relation) on $\mathcal{F}_{AB}$.
Downsets of $(\mathcal{F}_{AB}, {\leq})$ are called \emph{minor-closed} classes or \emph{minions.}
Equivalently, a set $C \subseteq \mathcal{F}_{AB}$ is a minion if $C \mathcal{J}_A \subseteq C$.

A set $\mathcal{C} \subseteq \mathcal{F}_{AB}$ is \emph{$m$-locally closed} if for all $f \in \mathcal{F}_{AB}$ (say $f$ is $n$-ary), it holds that $f \in C$ whenever for every finite subset $S \subseteq A^n$ of size at most $m$, there exists a $g \in \mathcal{C}$ such that $f \vert_S = g \vert_S$.
A set $\mathcal{C}$ is said to be \emph{locally closed} if it is  $m$-locally closed for every positive integer $m$.

Subsets of $A^m$ are called $m$-ary \emph{relations} on $A$.
Denote by $\mathcal{R}_A^{(m)}$ the set of all $m$-ary relations on $A$, and let $\mathcal{R}_A := \bigcup_{m \in \IN} \mathcal{R}_A^{(m)}$.
Let $f \in \mathcal{O}_A^{(n)}$ and $R \in \mathcal{R}_A^{(m)}$.
We say that the function $f$ \emph{preserves} the relation $R$ (or $f$ is a \emph{polymorphism} of $R$, or $R$ is an \emph{invariant} of $f$), and we write $f \preserves R$, if for all $\mathbf{a}_1, \dots, \mathbf{a}_n \in R$, we have $f(\mathbf{a}_1, \dots, \mathbf{a}_n) \in R$.
Here $f(\mathbf{a}_1, \dots, \mathbf{a}_n)$ means the componentwise application of $f$ to the tuples, i.e., if $\mathbf{a}_i = (a_{i1}, \dots, a_{im})$ for $i \in \{1, \dots, n\}$, then
\[
f(\mathbf{a}_1, \dots, \mathbf{a}_n) :=
(f(a_{11}, \dots, a_{n1}), \dots, f(a_{1m}, \dots, a_{nm})).
\]
%\pa{I think it should be 1m and nm instead of m1 and mn in the equation.}
The preservation relation $\preserves$ induces a Galois connection between the sets $\mathcal{O}_A$ and $\mathcal{R}_A$ of operations and relations on $A$.
Its polarities are the maps $\Pol \colon \mathcal{P}(\mathcal{R}_A) \to \mathcal{P}(\mathcal{O}_A)$ and $\Inv \colon \mathcal{P}(\mathcal{O}_A) \to \mathcal{P}(\mathcal{R}_A)$ given by the following rules: for all $\mathcal{R} \subseteq \mathcal{R}_A$ and $\mathcal{F} \subseteq \mathcal{O}_A$,
\begin{align*}
\Pol \mathcal{R} &:= \{ f \in \mathcal{O}_A \mid \forall R \in \mathcal{R} \colon f \preserves R \}, \\
\Inv \mathcal{F} &:= \{ R \in \mathcal{R}_A \mid \forall f \in \mathcal{F} \colon f \preserves R \}.
\end{align*}
Under this Galois connection, the closed sets of operations are precisely the locally closed clones.
The closed sets of relations, known as \emph{relational clones,}
%were also described in terms of closure under certain operations on relations.
are precisely the locally closed sets of relations that contain the empty relation and the diagonal relations and are closed under formation of primitively positively definable relations.
This was first shown for finite base sets by Bodnarchuk, Kaluzhnin, Kotov, Romov~\cite{BKKR1,BKKR2} and Geiger~\cite{Geiger} and later extended for arbitrary sets by Szab\'o~\cite{Szabo} and P\"oschel~\cite{Poschel-PolInv}.

The preservation relation can be adapted for functions of several arguments from $A$ to $B$; we now need to consider pairs of relations.
Let 
$$\mathcal{R}_{AB}^{(m)} := \mathcal{R}_A^{(m)} \times \mathcal{R}_B^{(m)}\quad\text{ and}\quad \mathcal{R}_{AB} := \bigcup_{m \in \IN} \mathcal{R}_{AB}^{(m)}$$ 
be the set of all ($m$-ary) \emph{relational constraints} from $A$ to $B$.

Let $f \in \mathcal{F}_{AB}^{(n)}$ and $(R,S) \in \mathcal{R}_{AB}^{(m)}$.
We say that $f$ \emph{preserves} $(R,S)$ (or $f$ is a \emph{polymorphism} of $(R,S)$, or $(R,S)$ is an \emph{invariant} of $f$), and we write $f \preserves (R,S)$, if for all $\mathbf{a}_1, \dots, \mathbf{a}_n \in R$, we have $f(\mathbf{a}_1, \dots, \mathbf{a}_n) \in S$.
As before, the preservation relation $\preserves$ induces a Galois connection between the sets $\mathcal{F}_{AB}$ and $\mathcal{R}_{AB}$ of functions and relational constraints from $A$ to $B$.
Its polarities are the maps $\Pol \colon \mathcal{P}(\mathcal{R}_{AB}) \to \mathcal{P}(\mathcal{F}_{AB})$ and $\Inv \colon \mathcal{P}(\mathcal{F}_{AB}) \to \mathcal{P}(\mathcal{R}_{AB})$ given by the following rules: for all $\mathcal{Q} \subseteq \mathcal{R}_{AB}$ and $\mathcal{F} \subseteq \mathcal{F}_{AB}$,
\begin{align*}
\Pol \mathcal{Q} &:= \{ f \in \mathcal{F}_{AB} \mid \forall (R,S) \in \mathcal{Q} \colon f \preserves (R,S) \}, \\
\Inv \mathcal{F} &:= \{ (R,S) \in \mathcal{R}_{AB} \mid \forall f \in \mathcal{F} \colon f \preserves (R,S) \}.
\end{align*}
The sets $\Pol \mathcal{Q}$ and $\Inv \mathcal{F}$ are said to be \emph{defined} by $\mathcal{Q}$ and $\mathcal{F}$, respectively.
Sets of functions of the form $\Pol \mathcal{Q}$ for some $\mathcal{Q} \subseteq \mathcal{R}_{AB}$ and sets of relational constraints of the form $\Inv \mathcal{F}$ for some $\mathcal{F} \subseteq \mathcal{F}_{AB}$ are said to be \emph{definable} by relational constraints and functions, respectively.

%Under this Galois connection, the closed sets of operations are precisely the locally closed minions.
%This was first shown for finite base sets by Pippenger~\cite{Pippenger} and later extended for arbitrary sets by Couceiro and Foldes~\cite{CouFol}.

%The description of the closure system associated to this Galois connection for arbitrary, possibly infinite, sets $A$ and $B$ was given in \cite{CouFol} with the aid of the notion of {\it local closure}.

The closed sets of functions under this Galois connection were described for finite base sets by Pippenger~\cite{Pippenger} and later for arbitrary sets by Couceiro and Foldes~\cite{CouFol}.
This result was refined by Couceiro~\cite{Couceiro05} for sets of functions definable by relations of restricted arity.

\begin{theorem}[{\cite{CouFol,Couceiro05}}] 
Let $A$ and $B$ be arbitrary nonempty sets, and let $C \subseteq \mathcal{F}_{AB}$.
\begin{enumerate}[label=\upshape{(\roman*)}]
    \item $C$ is definable by constraints if and only if $C$ is a locally closed minion.
    \item $C$ is definable by constraints of arity $m$ if and only if $C$ is an $m$-locally closed minion.
\end{enumerate}
\end{theorem}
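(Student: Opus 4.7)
The plan is to prove both directions of (i) and (ii) by the same canonical construction, keeping track of the arity bound $m$ throughout for part (ii). The \emph{only if} direction (easy) checks that $\Pol \mathcal{Q}$ is automatically a locally closed minion in part (i), and $m$-locally closed when $\mathcal{Q}$ consists of constraints of arity at most $m$ in part (ii). The \emph{if} direction (hard) starts from a (locally) closed minion $C$ and produces a separating constraint in $\Inv C$ for each $f \notin C$; this yields $\Pol \Inv C \subseteq C$, while the reverse inclusion comes for free from the Galois connection.

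For the easy direction I would first verify the minion property: if $f \preserves (R,S)$ and $f'$ is a minor of $f$ via a map $\sigma$, then applying $f'$ componentwise to tuples $\mathbf{b}_1, \dots, \mathbf{b}_p \in R$ only rearranges them into the componentwise application of $f$ to tuples that still lie in $R$, so the output remains in $S$. Next, whether $f \preserves (R,S)$ holds depends only on the restriction of $f$ to the finite set of "column tuples" arising from elements of $R$, which has cardinality at most the arity of $R$; this yields $m$-local closure for part (ii) and, taking unions over $m$, local closure for part (i).

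For the hard direction the plan is a row/column duality. Given $f \in \mathcal{F}_{AB}^{(n)} \setminus C$ and a finite $T = \{\mathbf{t}_1, \dots, \mathbf{t}_k\} \subseteq A^n$, define "column tuples" $\mathbf{a}_i := (t_{1,i}, \dots, t_{k,i}) \in A^k$ for $i = 1, \dots, n$, chosen so that $h(\mathbf{a}_1, \dots, \mathbf{a}_n) = (h(\mathbf{t}_1), \dots, h(\mathbf{t}_k))$ for every $n$-ary $h$. Set
\[
R_T := \{\mathbf{a}_1, \dots, \mathbf{a}_n\} \subseteq A^k,
\qquad
S_T := \{\, g(\mathbf{a}_1, \dots, \mathbf{a}_n) : g \in C^{(n)} \,\} \subseteq B^k.
\]
Then $f \preserves (R_T, S_T)$ is equivalent to the existence of some $g \in C^{(n)}$ agreeing with $f$ on $T$, so the $(m)$-local closure hypothesis guarantees, because $f \notin C$, the existence of such a $T$ (of size at most $m$ in part (ii)) for which $f$ violates $(R_T, S_T)$. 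Letting $\mathcal{Q}$ collect all these constraints as $T$ ranges over finite subsets of $A^n$ (of size $\leq m$, padded if one wishes to enforce arity exactly $m$) and as $n$ ranges over arities yields $C = \Pol \mathcal{Q}$.

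The main obstacle I expect is verifying that $(R_T, S_T) \in \Inv C$ — that is, $S_T$ genuinely absorbs evaluations of \emph{all} functions in $C$, not merely $n$-ary ones. This is exactly where the minion hypothesis on $C$ enters: any $p$-tuple of elements of $R_T$ has the form $(\mathbf{a}_{\sigma(1)}, \dots, \mathbf{a}_{\sigma(p)})$ for some $\sigma \colon \{1,\dots,p\} \to \{1,\dots,n\}$, so the componentwise application of $g \in C^{(p)}$ equals $g'(\mathbf{a}_1, \dots, \mathbf{a}_n)$ for the minor $g' \in C^{(n)}$ defined via $\sigma$, which belongs to $S_T$ by definition. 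This pinpoints why the closed sets on this side of the Galois connection are minions rather than clones: the minor-closure of $C$ is precisely what compensates for varying arities in the polymorphism condition.
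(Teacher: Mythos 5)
Your proof is correct and follows the standard Geiger--Pippenger-style argument that the cited references \cite{CouFol,Couceiro05} use; the paper itself only quotes this theorem and gives no proof, so there is nothing different to compare against. One small caveat: your claim that $f \preserves (R_T,S_T)$ is \emph{equivalent} to some $g \in C^{(n)}$ agreeing with $f$ on $T$ is an overstatement (preservation also constrains the values $f(\mathbf{a}_{\sigma(1)},\dots,\mathbf{a}_{\sigma(n)})$ for arbitrary $\sigma$, i.e.\ the minors of $f$), but your argument only uses the implication ``no such $g$ exists $\Rightarrow$ $f(\mathbf{a}_1,\dots,\mathbf{a}_n)\notin S_T$ $\Rightarrow$ $f$ violates the constraint,'' which is valid, so the proof stands.
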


The closed sets of relational constraints were described in terms of closure conditions that parallel those for relational clones.

The description of the dual objects of constraints on possibly infinite sets $A$ and $B$ was also provided in \cite{CouFol} and inspired by those given by Geiger \cite{Geiger,Szabo,Poschel-PolInv,Pippenger} and given in terms of positive primitive first-order relational definitions applied simultaneously on antecedents and consequents. Sets of constraints that are closed under such formation schemes are said to be {\it closed under conjunctive minors}. Moreover, every function satisfies the empty $(\varnothing,\varnothing)$ and the equality $(=_A,=_B)$ constraints, and if a function $f$ satisfies a constraint $(R,S)$, then $f$ also satisfies its {\it relaxations} $(R',S')$ such that $R'\subseteq R$ and $S'\supseteq S$. 
 
 As for functions, in the infinite case, we also need to consider a ``local closure'' condition to describe the dual closed sets of relational constraints on $A$ and $B$. A set $\mathcal{Q}$ of constraints on $A$ and $B$ is $n$-\emph{locally closed} if it contains every relaxation of its members whose antecedent has size at most $n$, and it is \emph{locally closed} if it is $n$-locally closed for every  positive integer $n$.

\begin{theorem}[{\cite{CouFol,Couceiro05}}]  
For arbitrary nonempty sets $A$ and $B$, and let $\mathcal{Q}\subseteq \mathcal{R}_{AB}$ be a set of relational constraints on $A$ and $B$.
\begin{enumerate}[label=\upshape{(\roman*)}]
    \item $\mathcal{Q}$ is definable by some set $\mathcal{C}\subseteq\mathcal{F}_{AB}$ if and only if it is locally closed, contains the binary equality and the empty constraints, and it is closed under relaxations and conjunctive minors.
    \item $\mathcal{Q}$ is definable by some set $\mathcal{C}\subseteq\mathcal{F}_{AB}^{(n)}$ of $n$-ary functions if and only it is $n$-locally closed, contains the binary equality and the empty constraints, and it is closed under relaxations and conjunctive minors.
\end{enumerate}
 
\end{theorem}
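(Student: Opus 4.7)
The plan is to establish the two directions separately, treating necessity by routine verification and focusing the real effort on the construction of a separating polymorphism for sufficiency, in the spirit of the classical Geiger--Pippenger strategy and its asymmetric extension by Couceiro and Foldes.

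For necessity in both (i) and (ii), I would verify each closure condition directly from the definition of $\preserves$. Every function trivially preserves $(\varnothing,\varnothing)$ and preserves $(=_A,=_B)$ by virtue of being single-valued. If $f \preserves (R,S)$ and $(R',S')$ is a relaxation, then any tuple $f(\mathbf{a}_1,\dots,\mathbf{a}_n)$ with $\mathbf{a}_i \in R' \subseteq R$ lies in $S \subseteq S'$, so $f \preserves (R',S')$. Closure of $\Inv \mathcal{C}$ under conjunctive minors follows because primitive-positive construction commutes with the componentwise action of $f$. Local closure is then immediate from the finitary character of the satisfaction condition: whether $f \preserves (R,S)$ depends on $f$'s values on finitely many argument tuples once $|R|$ is bounded.

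For sufficiency I argue by contraposition. Assume $(R_0,S_0) \notin \mathcal{Q}$; the goal is to exhibit $f \in \Pol \mathcal{Q}$ with $f \not\preserves (R_0,S_0)$. A compactness reduction based on the local closure hypothesis lets me assume that $R_0$ is finite, say $R_0 = \{\mathbf{r}_1,\dots,\mathbf{r}_k\}$, with $k \le n$ in part (ii); view this as the rows of a $k \times m$ matrix (where $m$ is the arity of $R_0$) and let $\mathbf{c}_1,\dots,\mathbf{c}_m \in A^k$ denote its columns. A $k$-ary function $g$ separates $(R_0,S_0)$ exactly when $(g(\mathbf{c}_1),\dots,g(\mathbf{c}_m)) \notin S_0$. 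Introduce
\[
T := \{(g(\mathbf{c}_1),\dots,g(\mathbf{c}_m)) : g \in (\Pol\mathcal{Q})^{(k)}\}.
\]
If I can show $(R_0,T) \in \mathcal{Q}$, then, since $(R_0,S_0) \notin \mathcal{Q}$ and $\mathcal{Q}$ is closed under relaxation, necessarily $T \not\subseteq S_0$, and any witnessing $g$ is the required separating polymorphism.

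The main obstacle is establishing $(R_0,T) \in \mathcal{Q}$. The plan here is to present $(R_0,T)$ as the result of applying a conjunctive-minor scheme to a suitable family of members of $\mathcal{Q}$ followed by relaxation. Concretely, for each candidate tuple $\mathbf{t} \notin T$ one locates, from the definition of $\Pol$, a constraint $(R_\mathbf{t},S_\mathbf{t}) \in \mathcal{Q}$ whose consequent excludes the pattern leading to $\mathbf{t}$; gluing these into one large conjunctive minor whose substitution pattern mirrors the column structure $\mathbf{c}_1,\dots,\mathbf{c}_m$ produces a constraint with antecedent $R_0$ and consequent contained in $T$, from which $(R_0,T)$ is obtained by a final relaxation. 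Part (ii) amounts to the finitary case of this argument, requiring only $n$-local closure, while part (i) additionally invokes full local closure to perform the initial reduction to a finite antecedent.
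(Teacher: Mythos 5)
The paper does not actually prove this theorem; it is imported from \cite{CouFol,Couceiro05} without proof, so your attempt can only be measured against the standard argument in those references. Your necessity direction is fine, and the skeleton of your sufficiency direction — contraposition, reduction to a finite antecedent via local closure, the generated consequent $T$, and the relaxation argument forcing $T \not\subseteq S_0$ — is exactly the classical strategy. The problem lies in the step you yourself identify as the main obstacle, proving $(R_0,T)\in\mathcal{Q}$: your plan for it would fail.

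The flaw is the per-tuple localization. For $\mathbf{t}\notin T$ you propose to find a single constraint $(R_\mathbf{t},S_\mathbf{t})\in\mathcal{Q}$ whose consequent ``excludes the pattern leading to $\mathbf{t}$'' via a substitution mirroring the columns $\mathbf{c}_1,\dots,\mathbf{c}_m$. But $\mathbf{t}\notin T$ only says that no $k$-ary $g\in\Pol\mathcal{Q}$ — a function defined on all of $A^k$ — maps $(\mathbf{c}_1,\dots,\mathbf{c}_m)$ to $\mathbf{t}$, and this obstruction is in general not visible on the $m$ columns alone: the partial map $\mathbf{c}_j\mapsto t_j$ may extend to a function preserving any single constraint of $\mathcal{Q}$ (even any finite subfamily), while no extension preserves all of them simultaneously, the violated substitution instances involving points of $A^k$ outside $\{\mathbf{c}_1,\dots,\mathbf{c}_m\}$. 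Hence no single $(R_\mathbf{t},S_\mathbf{t})$ substituted into the $m$ columns can exclude $\mathbf{t}$. The correct construction (Geiger's trick as adapted by Pippenger and Couceiro--Foldes) is one global conjunctive minor whose variable set is all of $A^k$, so that an assignment of the consequent variables is literally the full table of a $k$-ary function $g\colon A^k\to B$; it is taken over every $(P,Q)\in\mathcal{Q}$ and every substitution $\tau$ into $A^k$ whose antecedent instances are satisfied by the $k$ coordinate projections of $A^k$. Restricting to the distinguished variables $\mathbf{c}_1,\dots,\mathbf{c}_m$ yields $(R^*,T)$ with $R^*\supseteq R_0$ and consequent exactly $T$, and a final relaxation gives $(R_0,T)$. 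This also shows why the conjunctive-minor closure must admit infinite schemes when $A$ is infinite, a point your sketch never confronts. Your handling of part (ii) — bounding $k$ by $n$ via $n$-local closure and padding the $k$-ary separator to arity $n$ — is fine once this central step is repaired.
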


Let $K \subseteq \mathcal{F}_{AB}$ and let $C_1$ and $C_2$ be clones on $A$ and $B$.
We say that $K$ is \emph{stable under right composition with $C_1$} if $K C_1 \subseteq K$, and we say that $K$ is \emph{stable under left composition with $C_2$} if $C_2 K \subseteq K$.
We say that $K$ is \emph{$(C_1,C_2)$-stable} or a \emph{$(C_1,C_2)$-clonoid,} if $K C_1 \subseteq K$ and $C_2 K \subseteq K$.

Motivated by earlier results on linear definability of equational classes of Boolean functions \cite{CouFol04} which were described in terms of stability under compositions with the clone of constant preserving affine functions,  Couceiro and Foldes \cite{CouFol09} introduced a Galois framework for describing sets of functions $\mathcal{F}\subseteq \mathcal{F}_{AB}$ stable under right and left compositions with clones $C_1$ on $A$ and $C_2$ on $B$, respectively.

For that they restricted the defining dual objects to relational constraints $(R,S)$ where $R$ and $S$ invariant under  $C_1$ and $C_2$, respectively, i.e., $R \in \Inv C_1$ and $S \in \Inv C_2$.
These were referred to as \emph{$(C_1,C_2)$-constraints}.
We denote by $\mathcal{R}_{AB}^{(C_1,C_2)}$ the set of all $(C_1,C_2)$-constraints.

\begin{theorem}[{\cite{CouFol09}}]\label{C12clonoids} 
Let  $A$ and $B$ be arbitrary nonempty sets, and let $C_1$ and $C_2$ clones on $A$ and $B$, respectively.
A set $\mathcal{C}\subseteq\mathcal{F}_{AB}$ 
is definable by some set of $(C_1,C_2)$-constraints if and only if 
 $\mathcal{C}$ is locally closed and stable under right and left composition with $C_1$ and $C_2$, respectively, {\it i.e.}, it is a locally closed $(C_1,C_2)$-clonoid. 
\end{theorem}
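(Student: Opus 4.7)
The plan is to prove the two directions of the Galois characterization separately, using the general constraint-based theory recalled above as a template and grafting in the extra invariance requirements on antecedents and consequents.

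For the necessity direction, I verify that $\Pol \mathcal{Q}$ is a locally closed $(C_1,C_2)$-clonoid whenever $\mathcal{Q} \subseteq \mathcal{R}_{AB}^{(C_1,C_2)}$. Local closure is inherited from the general Couceiro--Foldes theorem above. For right stability under $C_1$: if $f \in \Pol \mathcal{Q}$, $g_1, \dots, g_n \in C_1$, $(R,S) \in \mathcal{Q}$, and $\mathbf{a}_1, \dots, \mathbf{a}_m \in R$, then since $R \in \Inv C_1$ each coordinatewise image $g_i(\mathbf{a}_1, \dots, \mathbf{a}_m)$ lies again in $R$, and then $f$ maps the resulting tuples into $S$. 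Left stability under $C_2$ is dual: any element already in $S$ is sent to $S$ under coordinatewise application of any $h \in C_2$, since $S \in \Inv C_2$.

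For the sufficiency direction, assume $\mathcal{C}$ is a locally closed $(C_1,C_2)$-clonoid and fix $f \in \mathcal{F}_{AB}^{(n)} \setminus \mathcal{C}$; I construct a $(C_1,C_2)$-constraint preserved by $\mathcal{C}$ but not by $f$. By local closure, pick a finite set $T = \{\mathbf{a}_1, \dots, \mathbf{a}_m\} \subseteq A^n$ on which no $g \in \mathcal{C}^{(n)}$ agrees with $f$. Viewing $T$ as an $m \times n$ matrix with columns $\mathbf{c}_1, \dots, \mathbf{c}_n \in A^m$, set
\[
R := \{\, t(\mathbf{c}_1, \dots, \mathbf{c}_n) \mid t \in C_1^{(n)} \,\},
\qquad
S_0 := \{\, (g(\mathbf{a}_1), \dots, g(\mathbf{a}_m)) \mid g \in \mathcal{C}^{(n)} \,\},
\]
and let $S$ be the $C_2$-invariant relation generated by $S_0$, namely all $h(\mathbf{d}_1, \dots, \mathbf{d}_k)$ with $h \in C_2^{(k)}$ and $\mathbf{d}_i \in S_0$. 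By construction $(R,S) \in \mathcal{R}_{AB}^{(C_1,C_2)}$. For any $g \in \mathcal{C}^{(p)}$ and $\mathbf{r}_i = t_i(\mathbf{c}_1, \dots, \mathbf{c}_n)$, the identity
\[
g(\mathbf{r}_1, \dots, \mathbf{r}_p) = g(t_1, \dots, t_p)(\mathbf{c}_1, \dots, \mathbf{c}_n)
\]
combined with right $C_1$-stability places the result in $S_0 \subseteq S$. Conversely, if $f(\mathbf{c}_1, \dots, \mathbf{c}_n) \in S$, it equals $\mathbf{b}_{h(g_1, \dots, g_k)} := (h(g_1, \dots, g_k)(\mathbf{a}_1), \dots, h(g_1, \dots, g_k)(\mathbf{a}_m))$ for some $h \in C_2$ and $g_i \in \mathcal{C}^{(n)}$; but left $C_2$-stability gives $h(g_1, \dots, g_k) \in \mathcal{C}^{(n)}$, yielding a member of $\mathcal{C}$ that agrees with $f$ on $T$, contradicting the choice of $T$.

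The delicate point is precisely this separation argument: the two stability hypotheses play dual roles in absorbing the enlargements of the witnessing configuration. Right $C_1$-stability is exactly what is needed to guarantee that closing the antecedent under $C_1$ does not produce images outside $S_0$, while left $C_2$-stability is exactly what is needed to guarantee that closing the consequent under $C_2$ does not accidentally engulf $f(\mathbf{c}_1, \dots, \mathbf{c}_n)$. Dropping either stability would break the corresponding half of the separation, which is why the $(C_1,C_2)$-clonoid condition is not only necessary but also sufficient.
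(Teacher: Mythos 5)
Your proof is correct: the necessity direction uses the invariance of $R$ and $S$ under $C_1$ and $C_2$ exactly as needed, and the sufficiency direction is the standard separation argument — extract a finite witness matrix via local closure, close its columns under $C_1$ to form the antecedent, take the $\mathcal{C}$-image closed under $C_2$ as the consequent, and use right/left stability to show $\mathcal{C}$ preserves the constraint while $f$ does not. The paper itself only cites this theorem from Couceiro and Foldes without reproving it, but your argument is precisely the strategy of that reference, which the paper's own proof of its analogical-constraint analogue explicitly imitates.
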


Dually, a set $\mathcal{Q}$ of $(C_1,C_2)$-constraints is definable by a set $\mathcal{C}\subseteq \mathcal{F}_{AB}$ if $\mathcal{Q} = \Inv \mathcal{C} \cap \mathcal{R}_{AB}^{(C_1,C_2)}$.

To describe the dual closed sets of $(C_1,C_2)$-constraints, Couceiro {\it et al.}\ \cite{CouFol09} observed that conjunctive minors of $(C_1,C_2)$-constraints are themselves $(C_1,C_2)$-constraints. However, this is not the case for relaxations. They thus proposed the following variants of local closure and of constraint relaxations.

A set $\mathcal{Q}_0$ of  $(C_1,C_2)$-constraints is said to be \emph{$(C_1,C_2)$-locally closed} if the set $\mathcal{Q}$ of all relaxations of the various constraints in $\mathcal{Q}_0$ is locally closed.
A relaxation $(R_0,S_0)$ of a relational constraint $(R,S)$ is said to be a \emph{$(C_1,C_2)$-relaxation} if $(R_0,S_0)$ is a $(C_1,C_2)$-constraint.

\begin{theorem}[{\cite{CouFol09}}] \label{thm:4}
Let  $A$ and $B$ be arbitrary nonempty sets, and let $C_1$ and $C_2$ clones on $A$ and $B$, respectively.
A set $\mathcal{Q}$ of $(C_1,C_2)$-constraints 
is definable by some set 
 $\mathcal{C}\subseteq \mathcal{F}_{AB}$ if and only if it is $(C_1,C_2)$-locally closed and contains the binary equality constraint,
 the empty constraint, and it is closed under $(C_1,C_2)$-relaxations and conjunctive minors.
\end{theorem}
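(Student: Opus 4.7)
The necessity direction is routine: each required closure property of $\mathcal{Q} = \Inv \mathcal{C} \cap \mathcal{R}_{AB}^{(C_1,C_2)}$ follows from a direct verification paralleling the arguments for the earlier Theorem on ordinary constraints. Specifically, every function preserves the binary equality and empty constraints, both of which are $(C_1,C_2)$-constraints; preservation is stable under ordinary relaxation (hence under $(C_1,C_2)$-relaxation); conjunctive minors of $(C_1,C_2)$-constraints preserved by $\mathcal{C}$ remain $(C_1,C_2)$-constraints (as observed in the paragraph preceding the theorem) and are still preserved; and $(C_1,C_2)$-local closure follows from ordinary local closure of $\Inv \mathcal{C}$ applied to the set of all relaxations.

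For sufficiency, the strategy is to reduce to Theorem~\ref{C12clonoids}'s ordinary-constraint counterpart (the second theorem of the excerpt). Given $\mathcal{Q}$ satisfying the stated properties, define
\[
\mathcal{Q}^{*} := \{ (R',S') \in \mathcal{R}_{AB} \mid \exists (R,S) \in \mathcal{Q},\; R' \subseteq R,\; S' \supseteq S \},
\]
the closure of $\mathcal{Q}$ under \emph{ordinary} relaxations. I would verify that $\mathcal{Q}^{*}$ satisfies the hypotheses of that earlier theorem: it contains the binary equality and empty constraints (inherited from $\mathcal{Q}$, since every constraint is its own relaxation); it is trivially closed under relaxations; it is locally closed, which is precisely the content of the assumed $(C_1,C_2)$-local closure of $\mathcal{Q}$; and it is closed under conjunctive minors, using that the conjunctive-minor construction is monotone (shrinking antecedents shrinks the resulting antecedent, enlarging consequents enlarges the resulting consequent), combined with the closure of $\mathcal{Q}$ itself under conjunctive minors. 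Applying the earlier theorem yields $\mathcal{Q}^{*} = \Inv(\Pol \mathcal{Q}^{*})$, and since $\Pol$ is insensitive to relaxation we have $\Pol \mathcal{Q}^{*} = \Pol \mathcal{Q}$, whence $\mathcal{Q}^{*} = \Inv(\Pol \mathcal{Q})$.

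It remains to show $\mathcal{Q}^{*} \cap \mathcal{R}_{AB}^{(C_1,C_2)} = \mathcal{Q}$. The inclusion $\supseteq$ is immediate. For $\subseteq$, any $(R_0,S_0) \in \mathcal{Q}^{*}$ is by construction a relaxation of some $(R,S) \in \mathcal{Q}$; if $(R_0,S_0)$ is itself a $(C_1,C_2)$-constraint, then it is in fact a $(C_1,C_2)$-relaxation of $(R,S)$, and the assumed closure of $\mathcal{Q}$ under $(C_1,C_2)$-relaxations places it in $\mathcal{Q}$. This gives $\mathcal{Q} = \Inv(\Pol \mathcal{Q}) \cap \mathcal{R}_{AB}^{(C_1,C_2)}$, i.e., $\mathcal{Q}$ is definable by $\Pol \mathcal{Q} \subseteq \mathcal{F}_{AB}$.

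The main obstacle is the verification that $\mathcal{Q}^{*}$ is closed under conjunctive minors. Unpacking the primitive-positive definition applied simultaneously to antecedents and consequents, one must show that if each constituent constraint is replaced by a relaxation, the resulting conjunctive minor is a relaxation of the conjunctive minor of the originals. This is essentially a monotonicity computation: the consequent side shrinks down over an intersection and the antecedent side is obtained via an existential quantification that is monotone in the constituent antecedents. A secondary delicate point is keeping the two local-closure notions straight; once one observes that $(C_1,C_2)$-local closure of $\mathcal{Q}$ is \emph{by definition} the ordinary local closure of $\mathcal{Q}^{*}$, the reduction to the ordinary theorem is clean.
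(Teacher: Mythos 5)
The paper does not prove this theorem---it is quoted from \cite{CouFol09}---so there is no in-paper argument to compare against; your proof is, however, essentially the reduction used in that source. Passing to the relaxation closure $\mathcal{Q}^{*}$, verifying the hypotheses of the unrestricted-constraint theorem (with the conjunctive-minor monotonicity observation---that the same positive primitive scheme applied to shrunken antecedents and enlarged consequents yields a relaxation of the original conjunctive minor---carrying the only real weight), and then recovering $\mathcal{Q}=\mathcal{Q}^{*}\cap\mathcal{R}_{AB}^{(C_1,C_2)}=\Inv(\Pol\mathcal{Q})\cap\mathcal{R}_{AB}^{(C_1,C_2)}$ via closure under $(C_1,C_2)$-relaxations is sound and complete at the level of detail given.
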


In this paper, we will focus on relational constraints whose antecedent and consequent are derived from analogies, and that we will refer to as \emph{analogical constraints}. We will denote the  set of all analogical constraints from $A$ to $B$ by $\mathcal{A}_{AB}$.

\section{Formal models of analogy}\label{sec3}

%{\flushleft\bf To be edited...}

In this section we  briefly survey different axiomatic settings to formally define analogies and  different approaches to address the two main problems dealing with AR, namely, analogy making and solving. 

Multiple attempts have been made to formalize and manipulate analogies, starting as early as De Saussure's work in 1916 %\cite{linguistique-generale:1916:saussure}, 
but there is not a consensual view on the topic.
Many works rely on the  common view of analogy as a geometric proportion ($a\times d=b\times c$) or as an arithmetic proportion ($ b-a=d-c$), which can be thought of as a {\it parallelogram rule} in a vector space.
This view led to an axiomatic approach \cite{Lepage01} in which analogies are quaternary relations satisfying $3$ postulates: reflexivity,  symmetry, and central permutation.
These postulates imply many other properties: {\it identity} ($a:a::b:b$), {\it internal reversal} (if $a:b::c:d$, then $b:a::d:c$), {\it extreme permutation} (if $a:b::c:d$, then $d:b::c:a$), etc.

Another frequently accepted postulate is {\it uniqueness} stating that if there exists $d$ such that $a:b::c:d$, then $d$ is unique. As argued in \cite{LimPR21}, the latter postulate is debatable and the authors illustrate it through linguistic examples,  {\it e.g.}, $\mathit{wine} : \mathit{French} :: \mathit{beer} : x$ as  $\mathit{Belgian}$, $\mathit{Czech}$ and $\mathit{German}$ seem reasonable. This is further illustrated in \cite{Antic2020} through arithmetic examples. Take, for instance, $20:4::30:x$ that has a clear solution $x=6$. However, $x=9$ is another solution since $(10\cdot 2):2^2::(10\cdot 3):3^2$. Independently, it has also been observed that uniqueness and central permutation are not compatible in simple analogy models in non-Euclidean domains~\cite{murena2018opening}. 

The framework \cite{Antic2020} assumes that pairs $(a,b)$ and $(c,d)$ are interpreted over two, possibly different, algebras $\mathbb{A}=(A,F)$ and  $\mathbb{B}=(B,F)$ with the same functional signature $L$, and called respectively the {\it source} and {\it target domains}. Analogies are then modeled by common rewriting transformations  called {\it justifications} of the form $s\rightarrow t$, where both $s$ and $t$ are $L$-terms in such a way that $a=s^{\mathbb{A}}(\mathbf{e}_1)$ and $b=t^{\mathbb{A}}(\mathbf{e}_1)$, for some $\mathbf{e}_1\in \mathbb{A}^{\mid \mathbf{x} \mid}$, and $c=s^{\mathbb{B}}(\mathbf{e}_2)$ and $d=t^{\mathbb{B}}(\mathbf{e}_2)$, for some $\mathbf{e}_2\in \mathbb{B}^{\mid \mathbf{x} \mid}$. A quadruple $(a,b,c,d)\in A^2\times B^2$ is then said to be an {\it analogical proportion}, denoted $a:b::c:d$, if there are no $a',b'\in A$ and $c', d'\in B$ such that
\begin{enumerate}
    \item the set of justifications of $(a,b,c,d')$  strictly contains that of  $(a,b,c,d)$,
    \item the set of justifications of $(b,a,d,c')$  strictly contains that of  $(b,a,d,c)$, 
    \item the set of justifications of $(c,d,a,b')$ strictly contains that of $(c,d,a,b)$, 
    \item the set of justifications of $(d,c,b,a')$ strictly contains that of  $(d,c,b,a)$.
\end{enumerate} 
Given $\mathbb{A}$ and  $\mathbb{B}$, the relation comprising all analogical proportions will be referred to as a \emph{formal model of analogy} or, simply, as an \emph{analogy}.  Note that every analogy fulfills  internal reversal and extreme permutation. 

This framework accommodates many formal models of analogies, including  the {\it factorial} view of \cite{StroppaY05} and the {\it functional} view of \cite{BarbotMP19,murena2020solving}, except that it is not bound by the central permutation postulate. Such a framework is close to Gentner's symbolic model of analogical reasoning \cite{GentnerH17} based on {\it structure mapping theory} and first implemented in \cite{FalkenhainerFG89}. Both share the view that analogies are compatible with structure preserving maps. 
However, the latter prefers knowledge connected facts to isolated ones, and the former fails to satisfactorily  account for analogies over different conceptual spaces as in $\mathit{wine} : \mathit{French} :: \mathit{beer} : \mathit{Belgian}$.

In this paper, we will focus on the case where the source and target domains coincide (i.e., $\mathbb{B}=\mathbb{A}$), and we will denote the set of all analogies (on $A$) by $\mathcal{A}_A$, and we may omit the subscript $A$ when it is clear from the context. For an analogy $R\in \mathcal{A}_A$ we will adopt the more specific notation $R(a,b,c,d)$
instead of the usual notation $a:b::c:d$, since we will be dealing simultaneously with multiple models of analogy.

\begin{example}\label{ex:Boolean-analogies}
Anti\'c~\cite{Antic-Boolean} has identified the following 5 relations as the formal models of analogies on the two-element set $\{0,1\}$.
Klein's~\cite{Klein} definition of Boolean proportion corresponds to $R_5$, while Miclet and Prade's~\cite{MicletPrade} definition corresponds to $R_4$.
Here and later, we represent a relation as a matrix whose columns are precisely the tuples belonging to the relation.

\begin{align*}
R_1 &:=
\begin{pmatrix}
0 & 1 & 0 & 1 & 1 & 0 & 1 & 0 & 0 & 1 & 0 & 1 \\
0 & 0 & 1 & 1 & 0 & 1 & 0 & 1 & 0 & 0 & 1 & 1 \\
0 & 0 & 0 & 0 & 1 & 1 & 0 & 0 & 1 & 1 & 1 & 1 \\
0 & 0 & 0 & 0 & 0 & 0 & 1 & 1 & 1 & 1 & 1 & 1
\end{pmatrix}
\\
R_2 &:=
\begin{pmatrix}
0 & 1 & 0 & 1 & 1 & 0 & 0 & 1 \\
0 & 0 & 1 & 1 & 0 & 1 & 0 & 1 \\
0 & 0 & 0 & 0 & 1 & 0 & 1 & 1 \\
0 & 0 & 0 & 0 & 0 & 1 & 1 & 1
\end{pmatrix}
\\
R_3 &:=
\begin{pmatrix}
0 & 1 & 1 & 0 & 0 & 1 & 0 & 1 \\
0 & 1 & 0 & 1 & 0 & 0 & 1 & 1 \\
0 & 0 & 1 & 0 & 1 & 1 & 1 & 1 \\
0 & 0 & 0 & 1 & 1 & 1 & 1 & 1
\end{pmatrix}
\\
R_4 &:=
\begin{pmatrix}
0 & 1 & 1 & 0 & 0 & 1 \\
0 & 1 & 0 & 1 & 0 & 1 \\
0 & 0 & 1 & 0 & 1 & 1 \\
0 & 0 & 0 & 1 & 1 & 1
\end{pmatrix}
\\
R_5 &:=
\begin{pmatrix}
0 & 1 & 1 & 0 & 1 & 0 & 0 & 1 \\
0 & 1 & 0 & 1 & 0 & 1 & 0 & 1 \\
0 & 0 & 1 & 1 & 0 & 0 & 1 & 1 \\
0 & 0 & 0 & 0 & 1 & 1 & 1 & 1
\end{pmatrix}
\end{align*}
\end{example}

%\section{Analogy preserving functions /classifiers}

%Miguel writes

%{\flushleft \bf Add notation:} $\mathcal{A}$ is the set of all analogies on $A$
\section{Galois theory for analogical classifiers
}\label{sec4}

%{\flushleft \bf Miguel Writes:} Intro \& survey IJCAI's and SUM papers

As mentioned in the Introduction, analogical inference yields competitive results in classification and recommendation tasks. However, the justification of why and when a classifier is compatible with the   analogical inference principle (AIP) remained rather obscure until the work of Couceiro {\it et al.} \cite{CouceiroHPR17}. In this paper the authors considered the minimal Boolean analogy model (see $R_4$ in Example \ref{ex:Boolean-analogies}) and  addressed the problem of determining those {\it Boolean classifiers for which the AIP always holds}, that is, for which there are no classification errors. Surprisingly, they showed that they correspond to ``analogy preserving'' (see Definition~\ref{def:AP} below) and that they constitute the clone of affine functions.
This result was later generalized to binary classification tasks on nominal (finite) domains in \cite{CouceiroLMPR20} where the authors considered the more stringent notion of ``hard analogy preservation''. By taking the same minimal analogy model made only of analogical proportions of the  form $a:a::b:b$ and $a:b::a:b$  on both the domain and the label set, the authors showed that in this case the sets of hard analogy preserving functions constitute Burle's clones \cite{Burle}.

These preliminary results ask for a better understanding of these analogical classifiers, and in this paper we seek a general theory of analogical classifiers that is not dependent on the underlying sets nor on particular models of analogy. More precisely, we generalize  the existing literature by establishing a Galois theory of analogical classifiers which we then use to explicitly describe the sets of Boolean analogical classifiers with respect to the pairs $(R,S)$ of the known Boolean models $R$ and $S$ of analogy. 
We first recall the notion of analogy preservation and establish some useful results that allow us to use the universal algebraic toolbox.

\begin{definition}
\label{def:AP}
Let $A$ and $B$ be sets, and let $R$ and $S$ be analogical proportions defined on the two sets, respectively.
A function $f \colon A^n \to B$ is \emph{analogy-preserving} (AP for short) relative to $(R,S)$ if for all $\mathbf{a}, \mathbf{b}, \mathbf{c}, \mathbf{d} \in A^n$, the following implication holds:
\[
\bigl(
R(\mathbf{a}, \mathbf{b}, \mathbf{c}, \mathbf{d})
\quad \text{and} \quad
\Ssolvable{S}(f(\mathbf{a}), f(\mathbf{b}), f(\mathbf{c}))
\bigr)
\implies
S(f(\mathbf{a}), f(\mathbf{b}), f(\mathbf{c}), f(\mathbf{d})),
\]
where
$R(\mathbf{a}, \mathbf{b}, \mathbf{c}, \mathbf{d})$ is a shorthand for $(a_i, b_i, c_i, d_i) \in R$ for all $i \in \{1, \dots, n\}$ and
$\Ssolvable{S}(f(\mathbf{a}), f(\mathbf{b}), f(\mathbf{c}))$ means that there is an $x \in B$ with $S(f(\mathbf{a}), f(\mathbf{b}),f(\mathbf{c}), x)$.
Denote by $\mathsf{AP}(R,S)$ the set of all analogy-preserving functions relative to $(R,S)$.
\end{definition}

This relation between functions and formal models of analogy gives rise to a Galois connection whose closed sets of functions correspond exactly to the classes of analogical classifiers that we now describe.

We start by stating and proving some useful results.

\begin{proposition}\label{prop:anpres-pres}
Let $R$ and $S$ be analogical proportions defined on sets $A$ and $B$, respectively.
Then $\mathsf{AP}(R,S) = \Pol (R,S')$, where
\begin{equation}\label{eq:S'}
S' := S \cup \{ (a, b, c, d) \in B^4 \mid \nexists x \in B \colon (a, b, c, x) \in S \}.
\end{equation}
%Consequently, $\mathsf{AP}(R,S)$ is a locally closed minion.
\end{proposition}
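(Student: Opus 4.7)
The plan is to prove the equality $\mathsf{AP}(R,S) = \Pol(R,S')$ by directly unfolding the definitions of both sides and observing that they agree condition-by-condition, the key point being that membership in $S'$ encodes exactly the implication ``$\Ssolvable{S}$-solvability at the first three components forces membership in~$S$''.

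First, I would restate what it means for $f \in \mathcal{F}_{AB}^{(n)}$ to belong to $\Pol(R,S')$. Unfolding the componentwise action of $f$ on the 4-ary constraint $(R,S')$: for every choice of $n$ tuples $(a_i,b_i,c_i,d_i) \in R$, the resulting 4-tuple $(f(\mathbf{a}),f(\mathbf{b}),f(\mathbf{c}),f(\mathbf{d}))$ lies in $S'$, where $\mathbf{a} = (a_1,\dots,a_n)$ and similarly for $\mathbf{b}, \mathbf{c}, \mathbf{d}$. This is exactly the hypothesis $R(\mathbf{a},\mathbf{b},\mathbf{c},\mathbf{d})$ of Definition~\ref{def:AP}, so the Pol-condition reads
\[
R(\mathbf{a},\mathbf{b},\mathbf{c},\mathbf{d}) \implies (f(\mathbf{a}),f(\mathbf{b}),f(\mathbf{c}),f(\mathbf{d})) \in S'.
\]

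Second, I would extract the pointwise characterization of $S'$ from its definition \eqref{eq:S'}. For any 4-tuple $(a,b,c,d) \in B^4$, I have $(a,b,c,d) \in S'$ if and only if either $(a,b,c,d) \in S$, or else no $x \in B$ satisfies $(a,b,c,x) \in S$; equivalently,
\[
(a,b,c,d) \in S' \iff \bigl(\Ssolvable{S}(a,b,c) \implies (a,b,c,d) \in S\bigr).
\]
Applying this with $(a,b,c,d) := (f(\mathbf{a}),f(\mathbf{b}),f(\mathbf{c}),f(\mathbf{d}))$ turns the Pol-condition into
\[
R(\mathbf{a},\mathbf{b},\mathbf{c},\mathbf{d}) \implies \bigl(\Ssolvable{S}(f(\mathbf{a}),f(\mathbf{b}),f(\mathbf{c})) \implies S(f(\mathbf{a}),f(\mathbf{b}),f(\mathbf{c}),f(\mathbf{d}))\bigr),
\]
which, by currying the implication, is precisely the AP condition from Definition~\ref{def:AP}.

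Since the two unfoldings produce logically equivalent statements for every arity $n$ and every choice of tuples, I conclude $\mathsf{AP}(R,S) = \Pol(R,S')$. There is no substantial obstacle here: the proof is essentially a definition-chase, and the only subtlety worth highlighting is that the ``enlargement'' from $S$ to $S'$ is designed precisely so that any quadruple whose first three coordinates are not $\Ssolvable{S}$-solvable is vacuously accepted, converting the conditional AP statement into an unconditional polymorphism statement.
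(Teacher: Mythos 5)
Your proof is correct and follows essentially the same route as the paper's: both arguments are a definition-chase that rewrites the AP condition as the unconditional preservation of $(R,S')$, the paper simply stating this equivalence in one line where you spell out the pointwise characterization $(a,b,c,d) \in S' \iff \bigl(\Ssolvable{S}(a,b,c) \implies (a,b,c,d) \in S\bigr)$ explicitly.
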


\begin{proof}
The condition of Definition~\ref{def:AP} can be written equivalently as follows:
for all $\mathbf{a}_1, \dots, \mathbf{a}_n \in R$, $f(\mathbf{a}_1, \dots, \mathbf{a}_n) \in S'$.
This is exactly what it means that $f$ preserves $(R,S')$.
Therefore, $\mathsf{AP}(R,S) = \Pol (R,S')$.
\end{proof}

\begin{example}
\label{ex:Boolean-analogies-continued}
In continuation to Example~\ref{ex:Boolean-analogies}, the derived relations as in  Proposition~\ref{prop:anpres-pres} corresponding to Anti\'c's analogical proportions on $\{0,1\}$ are the following:
\begin{gather*}
R'_1 = R_1,
\quad
R'_2 = R_2 \cup
\begin{pmatrix}
0 & 0 \\
1 & 1 \\
1 & 1 \\
0 & 1
\end{pmatrix},
\quad
R'_3 = R_3 \cup
\begin{pmatrix}
1 & 1 \\
0 & 0 \\
0 & 0 \\
0 & 1
\end{pmatrix},
\\
R'_4 = R_4 \cup
\begin{pmatrix}
0 & 0 & 1 & 1 \\
1 & 1 & 0 & 0 \\
1 & 1 & 0 & 0 \\
0 & 1 & 0 & 1
\end{pmatrix},
\quad
R'_5 = R_5.
\end{gather*}
\end{example}

To fully describe the sets of the form $\Pol (R,S')$ we need to introduce some variants of the closure conditions discussed in Section~\ref{sec2}

Let $\mathcal{R}$ 
%and $\mathcal{S}$ 
be 
set of $m$-ary relations on $A$. %and $B$, respectively. 
An   $m\times n$ matrix $D$ whose columns belong to a relation $R\in \mathcal{R}$, is called an  \emph{$\mathcal{R}$-locality}.
Let $\mathcal{Q} \subseteq \mathcal{R}_{AB}$, and let $\mathcal{Q}_1 := \{ R \in \mathcal{R}_A \mid \exists S \in \mathcal{R}_B \text{ such that } (R,S) \in \mathcal{Q} \}$. 
A set $\mathcal{C} \subseteq \mathcal{F}_{AB}$ is \emph{
%$(\mathcal{A}_A,\mathcal{A}_B)$
$Q$-locally closed} if for all $f \in \mathcal{F}_{AB}$ (say $f$ is $n$-ary), it holds that $f \in C$ whenever for every $\mathcal{Q}_1$-locality  $D$, either 
\begin{enumerate}
    \item there exists a $g \in \mathcal{C}$ such that $fD = gD$, or 
    
   % $x\in B$ such that  $(fD_{1*},fD_{2*},fD_{3*},x)\in S$. 
%    \item for any relation $R$ in $\mathcal{Q}_1$ such that $D \preceq R$, there is a $T \in \{ S \in \mathcal{R}_B \mid (R,S) \in \mathcal{Q}, \, \mathcal{C} R \subseteq S \}$ such that $f R \notin T$.
    \item for any relation $R$ in $\mathcal{Q}_1$ such that $D \preceq R$ and for any $$T \in \{ S \in \mathcal{R}_B \mid (R,S) \in \mathcal{Q}, \, \mathcal{C} R \subseteq S \}$$ we have that $f R \subseteq T$.
\end{enumerate}

Let $\mathcal{A}'_B := \{ S' \mid S \in \mathcal{A}_B \}$, and let $\mathcal{A}_{AB} :=  \mathcal{A}_A \times \mathcal{A}'_B $. We refer to the elements of $\mathcal{A}_{AB}$ as \emph{analogical constraints} from $A$ to $B$. The set of analogical constraints that are $(C_1,C_2)$-constraints will be denoted by
$$\mathcal{A}_{AB}^{(C_1,C_2)} := \mathcal{A}_{AB} \cap \mathcal{R}_{AB}^{(C_1,C_2)}.$$ 

A set $\mathcal{C}$ is said to be \emph{$(C_1,C_2)$-analogically locally closed} if it is $\mathcal{A}_{AB}^{(C_1,C_2)}$-locally closed.
Note that $\mathcal{A}_{AB} = \mathcal{A}_{AB}^{(\mathcal{J}_A,\mathcal{J}_B)}$, and in this case we simply say that $\mathcal{C}$ is \emph{analogically locally closed.}

%A set $\mathcal{C}$ is said to be \emph{$C$-analogically locally closed} if it is  $\mathcal{A}^{C}_A$-locally closed,  where $\mathcal{A}^{C}_A=\mathcal{A}_A\cap \Inv C$ is the set of all analogies on $A$ invariant under $C$.
% Note that $\mathcal{A}^{C}_A=\mathcal{A}_A$, if $C$ is the clone of projections on $A$, and in this case we simply say that $\mathcal{C}$ is analogically locally closed.

%\begin{proposition}\label{prop:analogy}
%Suppose that $\mathcal{C}$ is definable by some set $\mathcal{Q}$ of analogical constraints. Then, $\mathcal{C}$ is $\mathcal{Q}$-locally closed

%\end{proposition}

%A set $\mathcal{C}\subseteq\mathcal{F}_{AB}$ is \emph{analogically locally closed} if for all $f\in \mathcal{F}_{AB}$ (say $f$ is $n$-ary), it holds that for every $4\times n$ matrix  set whose columns 

\begin{theorem} 
Let  $A$ and $B$ be arbitrary nonempty sets, and let $C_1$ and $C_2$ be clones on $A$ and $B$, respectively.
\begin{enumerate}
     \item A set $\mathcal{C}\subseteq\mathcal{F}_{AB}$  is definable by analogical $(C_1,C_2)$-constraints if and only if it is a $(C_1,C_2)$-analogically locally closed $(C_1,C_2)$-clonoid.
    \item A set $\mathcal{C}\subseteq\mathcal{F}_{AB}$  is definable by analogical constraints if and only if it is an analogically locally closed minion.
\end{enumerate}
\end{theorem}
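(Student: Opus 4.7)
The plan is to prove part~(1) directly, using Theorem~\ref{C12clonoids} for the $(C_1,C_2)$-clonoid component and a careful unpacking of the disjunctive definition of $(C_1,C_2)$-analogical local closure, and then to deduce part~(2) as the specialization $C_1=\mathcal{J}_A$, $C_2=\mathcal{J}_B$. The specialization works because $\mathcal{A}_{AB}^{(\mathcal{J}_A,\mathcal{J}_B)}=\mathcal{A}_{AB}$ (every relation is preserved by projections), and a $(\mathcal{J}_A,\mathcal{J}_B)$-clonoid is exactly a minion (right-composition with $\mathcal{J}_A$ produces the minors of its members, while left-composition with $\mathcal{J}_B$ is trivially absorbed since $\pr_i^{(n)}(f_1,\dots,f_n)=f_i$).

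For the forward direction of~(1), assume $\mathcal{C}=\Pol\mathcal{Q}$ for some $\mathcal{Q}\subseteq\mathcal{A}_{AB}^{(C_1,C_2)}$. Since $\mathcal{A}_{AB}^{(C_1,C_2)}\subseteq\mathcal{R}_{AB}^{(C_1,C_2)}$, Theorem~\ref{C12clonoids} immediately yields that $\mathcal{C}$ is a (locally closed) $(C_1,C_2)$-clonoid. To obtain $(C_1,C_2)$-analogical local closure, I would take an $n$-ary $f$ satisfying the local hypothesis, fix an arbitrary $(R,S)\in\mathcal{Q}$ and tuples $\mathbf{a}_1,\dots,\mathbf{a}_n\in R$, and observe that the matrix $D$ with these columns is an $\mathcal{A}_{AB}^{(C_1,C_2)}$-locality (as $R$ is the antecedent of an analogical $(C_1,C_2)$-constraint). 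Whichever disjunct of the hypothesis applies forces $f(\mathbf{a}_1,\dots,\mathbf{a}_n)\in S$: in case~(1) directly from some $g\in\mathcal{C}$ preserving $(R,S)$ and agreeing with $f$ on $D$; in case~(2) by instantiating with $R$ and $T:=S$, which is admissible since $\mathcal{C}R\subseteq S$ because every $g\in\mathcal{C}=\Pol\mathcal{Q}$ preserves $(R,S)$. Hence $f$ preserves every constraint in $\mathcal{Q}$ and lies in $\mathcal{C}$.

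For the backward direction, set $\mathcal{Q}:=\Inv\mathcal{C}\cap\mathcal{A}_{AB}^{(C_1,C_2)}$ and argue $\mathcal{C}=\Pol\mathcal{Q}$. The inclusion $\mathcal{C}\subseteq\Pol\mathcal{Q}$ is immediate. For the reverse, let $f\in\Pol\mathcal{Q}$ and verify that clause~(2) of the $Q$-local closure definition is satisfied on every $\mathcal{A}_{AB}^{(C_1,C_2)}$-locality $D$. The pivotal observation is that for any admissible $R$ and $T$, the side condition $\mathcal{C}R\subseteq T$ is literally $(R,T)\in\Inv\mathcal{C}$; combining this with $(R,T)\in\mathcal{A}_{AB}^{(C_1,C_2)}$ places $(R,T)\in\mathcal{Q}$, whence $f$ preserves $(R,T)$ and $fR\subseteq T$. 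Clause~(2) therefore holds on every locality, and the assumed $(C_1,C_2)$-analogical local closure of $\mathcal{C}$ forces $f\in\mathcal{C}$.

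The main obstacle is the disjunctive, slightly self-referential nature of the $Q$-local closure condition: at first sight it is not obvious how clauses~(1) and~(2) interact or why the side condition $\mathcal{C}R\subseteq T$ appears. The decisive observation is that the set $\{T\mid(R,T)\in\mathcal{Q},\,\mathcal{C}R\subseteq T\}$ appearing in clause~(2) is exactly the slice of $\Inv\mathcal{C}\cap\mathcal{Q}$ lying above the antecedent $R$. Once this alignment is recognized, the forward direction uses clause~(2) with $T=S$ as a clean fallback whenever clause~(1) fails, while the backward direction uses the automatic validity of clause~(2) to conclude $f\in\mathcal{C}$ without invoking any explicit separation-by-constraints argument of the type used in the original Couceiro--Foldes proofs.
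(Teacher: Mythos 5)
Your proof is correct and follows essentially the same route as the paper's: part (2) as the specialization $C_1=\mathcal{J}_A$, $C_2=\mathcal{J}_B$, Theorem~\ref{C12clonoids} for the clonoid part, and the identification of the side condition $\mathcal{C}R\subseteq T$ with $(R,T)\in\Inv\mathcal{C}$ to link the local closure condition to preservation of analogical constraints. The only difference is presentational: you argue both directions directly (taking the canonical defining set $\Inv\mathcal{C}\cap\mathcal{A}_{AB}^{(C_1,C_2)}$), whereas the paper runs both in contrapositive form by exhibiting, for each $f\notin\mathcal{C}$, a locality where both clauses fail and a separating constraint.
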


\begin{proof}
Note that the second statement is a particular case of the first: just take $C_1$ and $C_2$ to be the clones of projections on $A$ and $B$, respectively. We will thus prove the first statement.

To prove that the conditions are necessary, observe that $\mathcal{C}$ is a $(C_1,C_2)$-clonoid by Theorem~\ref{C12clonoids}. It thus remains to show that it is $(C_1,C_2)$-analogically locally closed. Let $f\not\in \mathcal{C}$, say of arity $n$. Hence, there is an analogical $(C_1,C_2)$-constraint $(R,S)\in \mathcal{A}_{AB}^{(C_1,C_2)}$ that is satisfied by every function in $g\in \mathcal{C}$, but not by $f$. Let $\mathbf{a}_1, \dots, \mathbf{a}_n \in R$ such that $f(\mathbf{a}_1, \dots, \mathbf{a}_n) \not\in S$. Consider the $(\mathcal{A}_{AB}^{(C_1,C_2)})_1$-locality $D=(\mathbf{a}_1, \dots, \mathbf{a}_n)$. As every $g\in \mathcal{C}^{(n)}$ satisfies the constraint $(R,S)$, we have that $ fD \neq gD$. Also, $D\preceq R\in (\mathcal{A}_{AB}^{(C_1,C_2)})_1$ and it is clear that $S \in \{ S_0 \in \mathcal{R}_B \mid (R,S_0) \in \mathcal{A}_{AB}^{(C_1,C_2)}, \, \mathcal{C} R \subseteq S_0 \}$, and we have $f R \not\subseteq S$ since $fD\not \in S$.

To prove that the conditions are sufficient, we follow a similar strategy to that in \cite{CouFol09} and  show that for every $n$-ary $f\not \in \mathcal{C}$, there is an analogical $(C_1,C_2)$-constraint $(R,S)\in \mathcal{A}_{AB}^{(C_1,C_2)}$ that is satisfied by every function $g\in \mathcal{C}$, but not by $f$. The set of such analogical $(C_1,C_2)$-constraints will then define $\mathcal{C}$.
%Without loss of generality, we may assume that $\mathcal{C}\neq \varnothing$.

So suppose that $f\not \in \mathcal{C}$. Since $\mathcal{C}$ is $(C_1,C_2)$-analogically locally closed, there is a $(\mathcal{A}_{AB}^{(C_1,C_2)})_1$-locality $D$ such that 
$ fD \neq gD$, for every $n$-ary $g\in \mathcal{C}$, and there exist $R\in (\mathcal{A}_{AB}^{(C_1,C_2)})_1$ with $D\preceq R$, and  $S \in \{ S_0 \in \mathcal{R}_B \mid (R,S_0) \in \mathcal{A}_{AB}^{(C_1,C_2)}, \, \mathcal{C} R \subseteq S_0 \}$ such that  $f R \not \in S$. 

Since $(R,S)$ is an analogical $(C_1,C_2)$-constraint that is satisfied by every function $g\in \mathcal{C}$ (as $\mathcal{C}R\subseteq T$) but not by $f$, this constitutes the desired constraint separating $f$ and $\mathcal{C}$, and the proof is thus complete.
\end{proof}

Dually, a set $\mathcal{Q}$ of analogical $(C_1,C_2)$-constraints is definable by a set $\mathcal{C}\subseteq \mathcal{F}_{AB}$ if $\mathcal{Q} = \Inv \mathcal{C} \cap \mathcal{A}_{AB}^{(C_1,C_2)}$.
The description of the dual closed sets of analogical constraints is then an immediate consequence of Theorem~\ref{thm:4}.

\begin{theorem} 
Let  $A$ and $B$ be arbitrary nonempty sets, and let $C_1$ and $C_2$ be clones on $A$ and $B$, respectively.
\begin{enumerate}
     \item A set $\mathcal{Q}$ of analogical $(C_1,C_2)$-constraints is 
definable by some set 
 $\mathcal{C}\subseteq \mathcal{F}_{AB}$ if and only if there exists a set $\mathcal{Q}_0$ of constraints from $A$ to $B$ that is $(C_1,C_2)$-locally closed and contains the binary equality and 
 the empty constraints, and it is closed under $(C_1,C_2)$-relaxations and conjunctive minors, such that $\mathcal{Q}=\mathcal{A}_{AB}\cap \mathcal{Q}_0$.
    \item A set $\mathcal{Q}$ of analogical constraints from $A$ to $B$ is 
definable by some set 
 $\mathcal{C}\subseteq \mathcal{F}_{AB}$ if and only if there exists a set $\mathcal{Q}_0$ of constraints from $A$ to $B$ that is locally closed and contains the binary equality and 
 the empty constraints, and it is closed under relaxations and conjunctive minors, such that $\mathcal{Q}=\mathcal{A}_{AB}\cap \mathcal{Q}_0$.
\end{enumerate}
\end{theorem}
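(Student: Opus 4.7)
My strategy is to reduce both parts to Theorem~\ref{thm:4} by a direct intersection argument, since the phrase ``an immediate consequence of Theorem~\ref{thm:4}'' in the text between the two theorems signals exactly this. First I would observe that part~(2) is the specialization of part~(1) to $C_1 = \mathcal{J}_A$ and $C_2 = \mathcal{J}_B$: in this case the clones of projections leave every relation invariant, so $\mathcal{R}_{AB}^{(\mathcal{J}_A,\mathcal{J}_B)} = \mathcal{R}_{AB}$, ``$(C_1,C_2)$-locally closed'' specializes to ``locally closed'', and ``$(C_1,C_2)$-relaxation'' to ``relaxation''. Thus it suffices to establish part~(1).

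The bookkeeping step I would carry out is to spell out, using the definition of analogical definability recalled just before the statement, that $\mathcal{Q}$ is definable by $\mathcal{C}$ in the analogical sense precisely when $\mathcal{Q} = \Inv \mathcal{C} \cap \mathcal{A}_{AB}^{(C_1,C_2)}$, and since $\mathcal{A}_{AB}^{(C_1,C_2)} = \mathcal{A}_{AB} \cap \mathcal{R}_{AB}^{(C_1,C_2)}$, this rewrites as
\[
\mathcal{Q} \;=\; \mathcal{A}_{AB} \,\cap\, \bigl( \Inv \mathcal{C} \cap \mathcal{R}_{AB}^{(C_1,C_2)} \bigr).
\]
In other words, the analogically definable sets of $(C_1,C_2)$-constraints are exactly the intersections with $\mathcal{A}_{AB}$ of the sets of $(C_1,C_2)$-constraints that are definable in the sense of Theorem~\ref{thm:4}. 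Both directions of~(1) then fall out immediately. For ($\Rightarrow$), given $\mathcal{C}$, I would take $\mathcal{Q}_0 := \Inv \mathcal{C} \cap \mathcal{R}_{AB}^{(C_1,C_2)}$; Theorem~\ref{thm:4} guarantees that $\mathcal{Q}_0$ has all the listed closure properties, and $\mathcal{Q} = \mathcal{A}_{AB} \cap \mathcal{Q}_0$ is the displayed identity. For ($\Leftarrow$), given such a $\mathcal{Q}_0$, Theorem~\ref{thm:4} produces a $\mathcal{C} \subseteq \mathcal{F}_{AB}$ with $\mathcal{Q}_0 = \Inv \mathcal{C} \cap \mathcal{R}_{AB}^{(C_1,C_2)}$, and intersecting with $\mathcal{A}_{AB}$ turns the hypothesis $\mathcal{Q} = \mathcal{A}_{AB} \cap \mathcal{Q}_0$ into $\mathcal{Q} = \Inv \mathcal{C} \cap \mathcal{A}_{AB}^{(C_1,C_2)}$, so $\mathcal{C}$ analogically defines $\mathcal{Q}$.

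There is no substantive obstacle: all of the nontrivial work is already embedded in Theorem~\ref{thm:4}, and the remainder is a purely set-theoretic manipulation using $\mathcal{A}_{AB}^{(C_1,C_2)} = \mathcal{A}_{AB} \cap \mathcal{R}_{AB}^{(C_1,C_2)}$. The only subtlety worth flagging is interpretive: the notions ``$(C_1,C_2)$-locally closed'' and ``closed under $(C_1,C_2)$-relaxations'' were defined only for sets of $(C_1,C_2)$-constraints, so the phrase ``a set $\mathcal{Q}_0$ of constraints from $A$ to $B$'' in the hypothesis of~(1) should be read as tacitly requiring $\mathcal{Q}_0 \subseteq \mathcal{R}_{AB}^{(C_1,C_2)}$; under that convention Theorem~\ref{thm:4} applies verbatim and the proof closes.
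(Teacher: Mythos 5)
Your proposal is correct and matches the paper's approach: the paper gives no separate argument, stating only that the result is ``an immediate consequence of Theorem~\ref{thm:4}'', and your intersection argument via $\mathcal{Q}_0 := \Inv\mathcal{C} \cap \mathcal{R}_{AB}^{(C_1,C_2)}$ together with $\mathcal{A}_{AB}^{(C_1,C_2)} = \mathcal{A}_{AB} \cap \mathcal{R}_{AB}^{(C_1,C_2)}$ is exactly the intended unpacking of that remark. Your observation that part~(2) is the specialization $C_1 = \mathcal{J}_A$, $C_2 = \mathcal{J}_B$, and your flag about reading $\mathcal{Q}_0 \subseteq \mathcal{R}_{AB}^{(C_1,C_2)}$ in part~(1), are both consistent with the paper's conventions.
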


\section{Explicit description of Boolean analogical classifiers}

%{\flushleft \bf Try:} particular case of model of analogy without central permutation...

%\begin{center}
\begin{table}
\centering
\begin{tabular}{lllllll}
\toprule
\multicolumn{2}{l}{\multirow{2}{*}{$\mathsf{AP}(R,S)$}} & \multicolumn{5}{c}{$S$} \\
&& $R_1$ & $R_2$ & $R_3$ & $R_4$ & $R_5$ \\
\midrule
\multirow{5}{*}{$R$}
& $R_1$ & $\clOmegaOne$ & $\clC$        & $\clC$        & $\clC$ & $\clC$ \\
& $R_2$ & $\clOmegaOne$ & $\clI$        & $\clIneg$     & $\clC$ & $\clC$ \\
& $R_3$ & $\clOmegaOne$ & $\clIneg$     & $\clI$        & $\clC$ & $\clC$ \\
& $R_4$ & $\clL$        & $\clOmegaOne$ & $\clOmegaOne$ & $\clL$ & $\clL$ \\
& $R_5$ & $\clL$        & $\clC$        & $\clC$        & $\clL$ & $\clL$ \\
\bottomrule
\end{tabular}
\medskip
 \caption{Summary of results. Notation: $\clL$ denotes the class of linear (affine) functions, $\clOmegaOne$ the class of projections, negations, and constants, $\clIneg$   the class of negations and constants, $\clI$  the class of projections and constants, and
$\clC$  the class of constants.}
\label{table:summary}
\end{table}

%\bigskip
%\begin{tabular}{ll}
%$\clL$        & linear (affine) functions \\
%$\clOmegaOne$ & projections, negations, and constants \\
%$\clIneg$     & negations and constants \\
%$\clI$        & projections and constants \\
%$\clC$        & constants
%\end{tabular}

%\bigskip
%\end{center}

Recall the formal models of analogy $R_i$, $i\in [5]$, provided by Anti\'c~\cite{Antic-Boolean} on the two-element set $\{0,1\}$ (see Example~\ref{ex:Boolean-analogies}). In this section we make use of the Galois theory described in Section~\ref{sec4} to determine the classes of analogical classifiers $\mathsf{AP}(R_i,R_j)=\Pol(R_i, R_j')$ for all $i, j \in \nset{5}$ (see Example~\ref{ex:Boolean-analogies-continued} for the relations $R'_j$ and Equation \eqref{eq:S'} for the definition of the extension $S'$ of a relation $S$). The results are summarised in Table~\ref{table:summary}, together with our notation for various classes of Boolean functions.

For $a \in \{0,1\}$, let 
$\overline{a} := 1 - a,$ and for $\vect{a} = (a_1, \dots, a_n) \in \{0,1\}^n$, let 
\[
\overline{\vect{a}} := (\overline{a_1}, \dots, \overline{a_n}).
\]
Let $f \colon \{0,1\}^n \to \{0,1\}$.
The \emph{\textup{(}outer\textup{)} negation} $\overline{f}$, the \emph{inner negation} $f^\mathrm{n}$, and the \emph{dual} $f^\mathrm{d}$ of $f$ are the $n$-ary Boolean functions
%  $\overline{f}, f^\mathrm{n}, f^\mathrm{d} \colon \{0,1\}^n \to \{0,1\}$
given by the rules
\[
\overline{f}(\vect{a}) := \overline{f(\vect{a})},
\quad
f^\mathrm{n}(\vect{a}) := f(\overline{\vect{a}}),
\quad
f^\mathrm{d}(\vect{a}) := \overline{f(\overline{\vect{a}})},
\quad
\text{for all $\vect{a} \in \{0,1\}^n$.}
\]
For $C \subseteq \clAll$, we let $\overline{C} := \{ \overline{f} \mid f \in C \}$, $C^\mathrm{n} := \{ f^\mathrm{n} \mid f \in C \}$, $C^\mathrm{d} := \{ f^\mathrm{d} \mid f \in C \}$.

Up to permutation of arguments, the binary Boolean functions are the following:
\begin{itemize}
    \item the constant $0$ and $1$ functions, denoted respectively by
    $0$ and $1$,
    \item  the first projection $\pr_1\colon (x_1,x_2)\mapsto x_1$ and its negation $\neg_1=\overline{\pr_1}$,
    \item the conjunction $\wedge$ and its negation  $\uparrow$,
    \item the disjunction $\vee$ and its negation $\downarrow$,
    \item the implication $\rightarrow$ and its negation $\nrightarrow$, and
    \item the addition $+$ modulo 2 and its negation $\leftrightarrow$.
\end{itemize}
Note that $\uparrow$ and $\downarrow$ are often referred to as {\it Sheffer functions} as each one of them can generate the class of all Boolean functions by taking compositions and variable substitutions.
    
The ternary Boolean functions include the triple sum 
\begin{align*}
\oplus_3\colon (x_1,x_2,x_3)\mapsto (x_1+x_2)+x_3=x_1+(x_2+x_3)
\end{align*}
and its negation $\overline{\oplus_3}=\oplus_3^\mathrm{n}$, and the median function 
\begin{align*}
\mu\colon(x_1,x_2,x_3)&\mapsto (x_1\wedge x_2)\vee (x_1\wedge x_3)\vee(x_2\wedge x_3)\\
&= (x_1\vee x_2)\wedge (x_1\vee x_3)\wedge (x_2\vee x_3)\\
&=\oplus_3(x_1\wedge x_2,x_1\wedge x_3,x_2\wedge x_3)
\end{align*}
and its negation $\overline{\mu}=\mu^\mathrm{n}$.

We also make use of the following terminology.
For a relation $R \subseteq \{0,1\}^m$, we define 
its \emph{negation} $\overline{R}$  by $\overline{R} := \{ \overline{\vect{a}} \mid \vect{a} \in R \}$.

\begin{lemma}\label{lem:dual}
$\Pol(R,S) = \Pol(\overline{R},\overline{S})^\mathrm{d}$.
\end{lemma}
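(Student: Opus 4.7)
The plan is to prove the equivalent reformulation that $f \in \Pol(R,S)$ if and only if $f^\mathrm{d} \in \Pol(\overline{R},\overline{S})$. This is indeed equivalent: dualization is an involution (a quick check gives $(f^\mathrm{d})^\mathrm{d}(\vect{a}) = \overline{f^\mathrm{d}(\overline{\vect{a}})} = \overline{\overline{f(\vect{a})}} = f(\vect{a})$), so applying $(\cdot)^\mathrm{d}$ to both sides of $\Pol(R,S) = \Pol(\overline{R},\overline{S})^\mathrm{d}$ yields $\Pol(R,S)^\mathrm{d} = \Pol(\overline{R},\overline{S})$, which is exactly the membership reformulation.

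The central identity I would use is that, for componentwise application to tuples $\vect{b}_1,\dots,\vect{b}_n$,
\[
f^\mathrm{d}(\vect{b}_1, \dots, \vect{b}_n) = \overline{f(\overline{\vect{b}_1}, \dots, \overline{\vect{b}_n})},
\]
which follows coordinate-by-coordinate directly from the definition of $f^\mathrm{d}$. Combined with the defining property $\overline{R} = \{\overline{\vect{a}} \mid \vect{a} \in R\}$, the proof reduces to tracking where negations land.

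For the forward direction, assume $f \in \Pol(R,S)$ and take $\vect{b}_1,\dots,\vect{b}_n \in \overline{R}$. Then $\overline{\vect{b}_i} \in R$ for each $i$, so $f(\overline{\vect{b}_1},\dots,\overline{\vect{b}_n}) \in S$; negating gives $f^\mathrm{d}(\vect{b}_1,\dots,\vect{b}_n) \in \overline{S}$, so $f^\mathrm{d} \in \Pol(\overline{R},\overline{S})$. The reverse direction is obtained by applying the forward direction to the pair $(\overline{R},\overline{S})$ and the function $f^\mathrm{d}$, then using $\overline{\overline{R}} = R$, $\overline{\overline{S}} = S$, and $(f^\mathrm{d})^\mathrm{d} = f$.

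There is no real obstacle here; the only thing to watch is the bookkeeping of negations in the componentwise application. Conceptually, the lemma is an instance of the general principle that the nontrivial permutation of $\{0,1\}$ is an automorphism that acts compatibly on functions (via $(\cdot)^\mathrm{d}$) and on relations (via $\overline{\phantom{R}}$), and this action commutes with the $\Pol$--$\Inv$ Galois connection; the lemma just records this for relational constraints $(R,S)$.
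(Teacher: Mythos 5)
Your proof is correct and follows essentially the same route as the paper's: both establish $f \preserves (R,S)$ if and only if $f^\mathrm{d} \preserves (\overline{R},\overline{S})$ via the identity $f^\mathrm{d}(\vect{b}_1,\dots,\vect{b}_n) = \overline{f(\overline{\vect{b}_1},\dots,\overline{\vect{b}_n})}$ applied componentwise (the paper phrases it in matrix notation). Your explicit handling of the converse via the involution $(f^\mathrm{d})^\mathrm{d}=f$ and $\overline{\overline{R}}=R$ is just a slightly more detailed version of the paper's ``follows by the same argument.''
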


\begin{proof}
We need to show that $f \preserves (R, S)$ if and only if $f^\mathrm{d} \preserves (\overline{R}, \overline{S})$.
Suppose that $f \preserves (R, S)$, and let $M \prec \overline{R}$.
Hence, $\overline{M} \in R$ and we have
$
f^\mathrm{d} M = f^\mathrm{d} \overline{\overline{M}} = \overline{f \overline{M}} \in \overline{S},
$
and thus $f^\mathrm{d} \preserves (\overline{R}, \overline{S})$.
The converse implication follows by the same argument.
\end{proof}

\begin{fact}
\label{lem:inc}
Let $R$ and $S$ be $m$-ary relations on $\{0,1\}$.
\begin{enumerate}[label={\upshape (\roman*)}]
\item\label{lem:inc:id}
$\id \in \Pol(R, S)$ if and only if $R \subseteq S$.
\item\label{lem:inc:neg}
$\neg \in \Pol(R, S)$ if and only if $\overline{R} \subseteq S$.
\item\label{lem:inc:const}
For $a \in \{0,1\}$, if $(a, \dots, a) \in S$, then $a \in \Pol(R,S)$.
\end{enumerate}
\end{fact}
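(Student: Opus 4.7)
The statement records three direct consequences of the definition of $\preserves$ applied to the simplest Boolean functions. Each clause is essentially a matter of unfolding what it means for a unary function, or a constant, to preserve a relational constraint $(R,S)$ componentwise, so the plan is to carry out these three unfoldings in sequence.

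First, for clause \ref{lem:inc:id}, recall that $\id$ is unary, so $\id \preserves (R,S)$ means that for every single tuple $\vect{a} \in R$ we have $\id(\vect{a}) = \vect{a} \in S$. This is visibly equivalent to $R \subseteq S$, giving both directions at once.

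For clause \ref{lem:inc:neg}, negation $\neg$ is again unary; $\neg \preserves (R,S)$ says that for every $\vect{a} \in R$, applying $\neg$ componentwise yields $\overline{\vect{a}} \in S$. By definition $\overline{R} = \{\overline{\vect{a}} \mid \vect{a} \in R\}$, so this is equivalent to $\overline{R} \subseteq S$.

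For clause \ref{lem:inc:const}, one should fix the arity of the constant function (say $n$), observe that for any $\vect{a}_1, \dots, \vect{a}_n \in R$ the componentwise application of the $n$-ary constant-$a$ function produces the $m$-tuple $(a, \dots, a)$, and note that the hypothesis $(a,\dots,a) \in S$ immediately yields $a \preserves (R,S)$. Here there is a small subtlety only in that $R$ might be empty, in which case preservation holds vacuously regardless of $S$; the conclusion still holds.

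There is no genuine obstacle — each assertion reduces to a one-line verification from the definition of $\preserves$ — so the proof will essentially be three short paragraphs, each opening with ``By definition, $f \preserves (R,S)$ iff $\dots$'' and closing with the stated equivalence.
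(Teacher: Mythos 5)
Your proof is correct: each clause is exactly the definitional unfolding of $\preserves$ for the unary functions $\id$, $\neg$, and the constants, and your handling of the empty-$R$ edge case in clause (iii) is sound. The paper states this as a \emph{Fact} with no proof at all, treating it as immediate from the definition of preservation, so your argument is precisely the intended one.
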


% \begin{proof}
% Immediately obvious.
% \end{proof}

Observe that the constant tuples \vect{0} and \vect{1} belong to every $R_i$ ($i \in \nset{5}$), and thus every such $R_i$ is invariant under $\clI$, i.e., 
$\clI R_i\subseteq R_i$. Hence, for every $i, j \in \nset{5}$, $\Pol(R_i, R'_j)$ is stable under right composition with $\clI$. This leads us to considering  the following notion. 

A function $f$ is said to be a $C$-\emph{minor} of a function $g$ if $f\in gC$. Recall that in the particular case when $C=\mathcal{J}_{\{0,1\}}$, $f$ is  called a {minor} of $g$. The functions $f$ and $g$ are said to be {\it equivalent}, denoted by $f\equiv g$, if $f$ is a minor of $g$ and $g$ is a minor of $f$. For further background on these notions and variants see, e.g., \cite{Lehtonen2006,LehtonenSzendrei,Pippenger}.

Since $\Pol(R_i, R'_j)$ is stable under right composition with $\clI$, this means that if an $\clI$-{minor} $f$ of a function $g$ does not belong to $\Pol(R_i, R'_j)$, then neither does $g$.
This observation will be used repeatedly in the proofs of the results that will follow.

\begin{lemma}
\label{lem:Pol-easy}
For any $i, j \in \nset{5}$, the following statements hold.
\begin{enumerate}[label={\upshape (\roman*)}]
\item\label{lem:Pol-easy:plus}
If $\id \notin \Pol(R_i, R'_j)$ or $\neg \notin \Pol(R_i, R'_j)$, then $\mathord{+}, \mathord{\leftrightarrow}, \mathord{\rightarrow}, \mathord{\nrightarrow} \notin \Pol(R_i, R'_j)$.
\item\label{lem:Pol-easy:triplesum}
If $\mathord{+} \notin \Pol(R_i, R'_j)$, then $\mathord{\oplus_3} \notin \Pol(R_i, R'_j)$.
\item\label{lem:Pol-easy:mu}
If $\mathord{\wedge} \notin \Pol(R_i, R'_j)$ or $\mathord{\vee} \notin \Pol(R_i, R'_j)$, then $\mu \notin \Pol(R_i, R'_j)$.
\end{enumerate}
\end{lemma}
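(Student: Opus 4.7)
The plan is to exploit the observation immediately preceding the lemma: since each $R_i$ contains the two constant tuples $\vect{0}$ and $\vect{1}$, it satisfies $\clI R_i \subseteq R_i$, and so $\Pol(R_i, R'_j)$ is stable under right composition with the clone $\clI$ of projections and constants. Consequently, if $h$ is an $\clI$-minor of $g$ and $g \in \Pol(R_i, R'_j)$, then $h \in \Pol(R_i, R'_j)$ as well; contrapositively, non-membership of an $\clI$-minor $h$ forces non-membership of $g$. Each of the three implications in the lemma therefore reduces to exhibiting the ``simpler'' function on the left as an $\clI$-minor of the ``more complex'' function on the right, which is achieved by substituting a constant for one of the arguments.

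For part (i), I would verify the following identifications: $x + 0 = x$ and $x + 1 = \overline{x}$ show that both $\id$ and $\neg$ are $\clI$-minors of $\mathord{+}$; $x \leftrightarrow 1 = x$ and $x \leftrightarrow 0 = \overline{x}$ do the same for $\mathord{\leftrightarrow}$; $1 \rightarrow x = x$ and $x \rightarrow 0 = \overline{x}$ for $\mathord{\rightarrow}$; and $x \nrightarrow 0 = x$ together with $1 \nrightarrow x = \overline{x}$ for $\mathord{\nrightarrow}$. Hence either of $\id \notin \Pol(R_i, R'_j)$ or $\neg \notin \Pol(R_i, R'_j)$ individually forces all four binary functions out of $\Pol(R_i, R'_j)$. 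For part (ii), the identity $\oplus_3(x, y, 0) = x + y$ exhibits $\mathord{+}$ as an $\clI$-minor of $\mathord{\oplus_3}$. For part (iii), the identities $\mu(x, y, 0) = x \wedge y$ and $\mu(x, y, 1) = x \vee y$ exhibit $\mathord{\wedge}$ and $\mathord{\vee}$ respectively as $\clI$-minors of $\mu$.

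There is really no obstacle here beyond the bookkeeping of constant substitutions: once the $\clI$-stability of $\Pol(R_i, R'_j)$ is established from $\vect{0}, \vect{1} \in R_i$, the lemma becomes a routine checklist of elementary Boolean identities. The only point that warrants a brief mention in the write-up is that each such substitution of a constant into one argument of a binary (or ternary) function is genuinely a composition with members of $\clI$, so the resulting function is an $\clI$-minor in the precise sense of the definition $f \in g\clI$ recalled just before the lemma.
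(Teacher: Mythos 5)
Your proposal is correct and follows exactly the paper's argument: establish $\clI$-stability of $\Pol(R_i,R'_j)$ from $\vect{0},\vect{1}\in R_i$ and then exhibit the functions in each antecedent as $\clI$-minors of those in the consequent via constant substitutions. The only difference is that you spell out the identities for parts (ii) and (iii) as well, which the paper leaves to the reader.
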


\begin{proof} These results follow immediately from the observation that the sets of the form $\Pol(R_i, R'_j)$ are closed under taking  $\clI$-{minors}. The proof is then obtained by verifying that the functions in the antecedent of the implications are $\clI$-{minors} of those in the consequent of the implication. 
We prove this claim by explicitly for \ref{lem:Pol-easy:plus}, and leave the remaining for the reader.

\ref{lem:Pol-easy:plus} For every $x\in \{0,1\}$,
%If $\id \notin \Pol(R_i, R'_j)$, then there exists an $\vect{a} \in R_i$ such that $\id(\vect{a}) \notin R'_j$.
%Since $\vect{0}, \vect{1} \in R_i$, we also have
$\id(x)=\mathord{+}(x, 0) = \mathord{\leftrightarrow}(x,1)=\mathord{\rightarrow}(1,x)=\mathord{\nrightarrow}(x,0).$
Similarly, we also have $\neg(x)=\mathord{+}(x, 1) =\mathord{\leftrightarrow}(x, 0) = \neg(x)=\mathord{\rightarrow}(x, 0) =\mathord{\nrightarrow}(1, x)$.

\ref{lem:Pol-easy:triplesum} and 
%If $\mathord{+} \notin \Pol(R_i, R'_j)$, then there exist $\vect{a}, \vect{b} \in R_i$ such that $\mathord{+}(\vect{a}, \vect{b}) \notin R'_j$.
%Since $\vect{0} \in R_i$, we also have $\mathord{\oplus_3}(\vect{a}, \vect{b}, \vect{0}) = \mathord{+}(\vect{a}, \vect{b}) \notin R'_j$.
\ref{lem:Pol-easy:mu} are shown similarly.
%The claim follows immediately from the fact that $\mu(\vect{a}, \vect{b}, \vect{1}) = \mathord{\wedge}(\vect{a}, \vect{b})$ and $\mu(\vect{a}, \vect{b}, \vect{0}) = \mathord{\vee}(\vect{a}, \vect{b})$.
% Making use of the fact that $\mu(\vect{a}, \vect{b}, \vect{1}) = \mathord{\wedge}(\vect{a}, \vect{b})$ and $\mu(\vect{a}, \vect{b}, \vect{0}) = \mathord{\vee}(\vect{a}, \vect{b})$, it follows immediately that if $\mathord{\wedge} \notin \Pol(R_i, R'_j)$ or $\mathord{\vee} \notin \Pol(R_i, R'_j)$, then $\mu \notin \Pol(R_i, R'_j)$.
% If $\mathord{\wedge} \notin \Pol(R_i, R'_j)$, then there exist $\vect{a}, \vect{b} \in R_i$ such that $\mathord{\wedge}(\vect{a}, \vect{b}) \notin R'_j$.
% Since $\vect{0} \in R_i$, we also have $\mu(\vect{a}, \vect{b}, \vect{1}) = \mathord{\wedge}(\vect{a}, \vect{b}) \notin R'_j$.
% The proof is similar in the case when $\mathord{\vee} \notin \Pol(R_i, R'_j)$; we just make use of the fact that $\mu(\vect{a}, \vect{b}, \vect{0}) = \mathord{\vee}(\vect{a}, \vect{b})$.
\end{proof}

\begin{lemma}\label{lem:C}
For any $i, j \in \nset{5}$, if $\id, \neg \notin \Pol(R_i,R'_j)$, then $\Pol(R_i,R'_j) = \clC$.
\end{lemma}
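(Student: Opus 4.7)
The plan is to prove the two inclusions $\clC \subseteq \Pol(R_i, R'_j)$ and $\Pol(R_i, R'_j) \subseteq \clC$ separately. The first inclusion is immediate from Fact \ref{lem:inc}\ref{lem:inc:const}: since the constant tuples $\vect{0}$ and $\vect{1}$ belong to every $R_j$ (as one reads directly from Example \ref{ex:Boolean-analogies}), they also belong to every $R'_j$, and hence both constants $0$ and $1$ lie in $\Pol(R_i, R'_j)$.

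For the reverse inclusion, I would exploit the stability of $\Pol(R_i, R'_j)$ under right composition with $\clI$ noted just before Lemma \ref{lem:Pol-easy}. The consequence is that if $f$ is an $\clI$-minor of $g$ and $f \notin \Pol(R_i, R'_j)$, then $g \notin \Pol(R_i, R'_j)$ either. The key step of the proof is therefore to show that every non-constant Boolean function admits $\id$ or $\neg$ as an $\clI$-minor. Given a non-constant $g$ of arity $n$, some variable $x_k$ must be essential, so there exist $a_1, \dots, a_{k-1}, a_{k+1}, \dots, a_n \in \{0,1\}$ for which the unary restriction $h(x) := g(a_1, \dots, a_{k-1}, x, a_{k+1}, \dots, a_n)$ is non-constant, which forces $h \in \{\id, \neg\}$. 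Since $h$ is obtained from $g$ by placing $\id$ in the $k$-th position and the constants $a_\ell$ in the remaining positions, and all of $\id, 0, 1$ belong to $\clI$, we indeed have $h \in \{g\} \clI$.

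Putting the two pieces together: under the hypothesis $\id, \neg \notin \Pol(R_i, R'_j)$, the $\clI$-minor argument excludes every non-constant function from $\Pol(R_i, R'_j)$, leaving only the two constants, which are already present by the first step. Hence $\Pol(R_i, R'_j) = \clC$. I do not anticipate any real obstacle; the argument is a short, structural application of stability under $\clI$-minors, and the only point that warrants a line of justification is the elementary fact that a non-constant Boolean function has an essential variable, which follows by walking along an edge path in the Boolean cube between two inputs on which $g$ takes different values.
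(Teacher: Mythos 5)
Your proof is correct and follows essentially the same route as the paper: constants lie in $\Pol(R_i,R'_j)$ because the constant tuples belong to every $R'_j$, and every non-constant function is excluded because it has an $\clI$-minor in $\{\id,\neg\}$, using stability under right composition with $\clI$. The paper states the latter fact without proof, whereas you supply the (correct) essential-variable justification.
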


\begin{proof}
Just note that the only functions that do not have $\clI$-{minors} in $\{\id, \neg\}$ are the constant functions. The result thus follows from the observations above.
%We have $0, 1 \in \Pol(R_i,R'_j)$ by Fact~\ref{lem:inc}\ref{lem:inc:const}.
%Since $\id, \neg \notin \Pol(R_i,R'_j)$, any function whose diagonal $\id$ or $\neg$ is excluded from $\Pol(R_i,R'_j)$.
%Up to equivalence, $\mathord{+}$, $\mathord{\leftrightarrow}$, $\mathord{\nrightarrow}$, and $\mathord{\rightarrow}$ are the only binary functions whose diagonal is $0$ or $1$.
%It follows from Lemma~\ref{lem:Pol-easy}\ref{lem:Pol-easy:plus} that $\mathord{+}, \mathord{\leftrightarrow}, \mathord{\nrightarrow}, \mathord{\rightarrow} \notin \Pol(R_i,R'_j)$.
%We are left with $\Pol(R_i,R'_j) = \clC$.
\end{proof}

\begin{proposition}
$\mathsf{AP}(R_1,R_1)=\Pol(R_1,R'_1) = \clOmegaOne$.
\end{proposition}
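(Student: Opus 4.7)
The plan is to establish both inclusions $\clOmegaOne \subseteq \Pol(R_1,R_1)$ and $\Pol(R_1,R_1) \subseteq \clOmegaOne$ separately. First note that $R'_1 = R_1$ by Example~\ref{ex:Boolean-analogies-continued}, so Proposition~\ref{prop:anpres-pres} gives $\mathsf{AP}(R_1,R_1) = \Pol(R_1,R_1)$; since antecedent and consequent coincide on $\{0,1\}$, this set is in fact a clone.

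For the inclusion $\clOmegaOne \subseteq \Pol(R_1,R_1)$ I would simply appeal to Fact~\ref{lem:inc}. The trivial inclusion $R_1 \subseteq R_1$ gives $\id \in \Pol(R_1,R_1)$; inspection of $R_1$ in Example~\ref{ex:Boolean-analogies} shows that its twelve columns fall into six complementary pairs, so $\overline{R_1} = R_1$ and hence $\neg \in \Pol(R_1,R_1)$; and the tuples $(0,0,0,0)$ and $(1,1,1,1)$ lie in $R_1$, yielding both Boolean constants.

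For the reverse inclusion, the strategy is to exclude every binary Boolean operation that essentially depends on both of its arguments. Since $R_1$ contains the constant tuples, $\clI R_1 \subseteq R_1$, so $\Pol(R_1,R_1)$ is closed under right composition with $\clI$, i.e.\ under $\clI$-minors. By the folklore fact that every Boolean function essentially depending on at least two arguments admits, up to permutation of variables, one of $\wedge,\vee,+,\leftrightarrow,\to,\nrightarrow,\uparrow,\downarrow$ as an $\clI$-minor, it suffices to rule out these eight. Because $\overline{R_1} = R_1$, Lemma~\ref{lem:dual} gives $\Pol(R_1,R_1) = \Pol(R_1,R_1)^{\mathrm{d}}$, i.e.\ closure under taking duals; and because $\neg \in \Pol(R_1,R_1)$ and the set is a clone, it is closed under outer negation $f \mapsto \overline{f}$. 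The identities $\vee = \wedge^{\mathrm{d}}$, $\uparrow = \overline{\wedge}$, $\downarrow = \overline{\vee}$, $\leftrightarrow = \overline{+}$, and $\nrightarrow = \overline{\to}$ then reduce the work to three binary operations $\wedge$, $+$, and $\to$, for which a short search inside $R_1 \times R_1$ furnishes the counterexamples $(1,0,1,0) \wedge (0,1,1,0) = (0,0,1,0) \notin R_1$, $(1,0,0,0) + (1,0,1,0) = (0,0,1,0) \notin R_1$, and $(0,0,1,1) \to (1,0,1,0) = (1,1,1,0) \notin R_1$.

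The step that I expect to require the most care is the reduction from arbitrary arity to a finite list of binary operations, namely the assertion that every non-essentially-unary Boolean function has one of the eight non-trivial binary operations as an $\clI$-minor. One can prove this directly by a hypercube-path argument on the fixed coordinates (a transition between a restriction that depends only on $x_j$ and one that depends only on $x_i$, or between a constant restriction and a dependent one, forces the ``multiplexer'' $\overline{c_k} x_j \vee c_k x_i$ or a similar operation to appear as a ternary $\clI$-minor, whose further restrictions yield $\wedge$, $\nrightarrow$, etc.). Alternatively one can invoke Post's lattice: any clone properly containing $\clOmegaOne$ must contain at least one of $\wedge$, $\vee$, or $+$ as a term operation, so excluding these three already forces $\Pol(R_1,R_1) = \clOmegaOne$.
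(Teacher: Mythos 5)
Your proof is correct, and all three counterexamples check out: $(1,0,1,0)$ and $(0,1,1,0)$, $(1,0,0,0)$ and $(1,0,1,0)$, and $(0,0,1,1)$ and $(1,0,1,0)$ are indeed columns of $R_1$, and the tuples $(0,0,1,0)$ and $(1,1,1,0)$ they produce under $\wedge$, $+$ and $\rightarrow$ lie outside $R_1$ (which consists of all quadruples $(a,b,c,d)$ with $a=b\Rightarrow c=d$). The route differs from the paper's in the reduction step. The paper exploits the fact that $\Pol(R_1,R'_1)=\Pol R_1$ is a clone and rules out the generators $\wedge$, $\vee$, $\oplus_3$, $\mu$ of the minimal clones with essentially non-unary generators, so that no function with two or more essential variables can survive. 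You instead rule out the eight essentially binary operations, cut the list down to $\wedge$, $+$, $\rightarrow$ using $\overline{R_1}=R_1$ together with Lemma~\ref{lem:dual} and closure under outer negation, and then invoke the reduction of an arbitrary function with at least two essential variables to an essentially binary $\clI$-minor. That last reduction is the only point needing care, but it is sound: for non-affine $f$ it is exactly the multilinear-polynomial argument the paper itself gives in the proof that $\mathsf{AP}(R_4,R_4)=\clL$, and for affine $f$ with at least two essential variables one substitutes $0$ for all but two of them to obtain $+$ or $\leftrightarrow$. Your alternative via Post's lattice is in fact the shortest correct argument here, since the only clones properly containing $\clgen{0,1,\neg}=\clOmegaOne$ are $\clL$ and the clone of all Boolean functions, both of which contain $+$; so your $+$ counterexample alone finishes the proof once Fact~\ref{lem:inc} gives the inclusion $\clOmegaOne\subseteq\Pol R_1$. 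The trade-off is that the paper's minimal-clone argument also needs only the clone structure, whereas your primary $\clI$-minor argument is more robust in that it would still apply verbatim in the other cases of Table~\ref{table:summary}, where $\Pol(R_i,R'_j)$ is not a clone.
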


\begin{proof}
Since $R_1 = R'_1$, it follows immediately that $\Pol(R_1,R'_1) = \Pol R_1$ is a clone.
\begin{itemize}
\item
$0, 1, \id, \neg \in \Pol(R_1,R'_1)$ by Fact~\ref{lem:inc}.

\item
$\mathord{\wedge}, \mathord{\vee}, \mathord{\oplus_3} \notin \Pol(R_1,R'_1)$ because
\begin{align*}
&
\mathord{\wedge}
\begin{pmatrix} 1 & 0 \\ 0 & 1 \\ 1 & 1 \\ 0 & 0 \end{pmatrix}
= \begin{pmatrix} 0 \\ 0 \\ 1 \\ 0 \end{pmatrix}
\notin R'_1,
&&
\mathord{\vee}
\begin{pmatrix} 1 & 0 \\ 0 & 1 \\ 1 & 1 \\ 0 & 0 \end{pmatrix}
= \begin{pmatrix} 1 \\ 1 \\ 1 \\ 0 \end{pmatrix}
\notin R'_1,
\\ &
\mathord{\oplus_3}
\begin{pmatrix} 1 & 0 & 1 \\ 0 & 1 & 1 \\ 1 & 1 & 1 \\ 0 & 1 & 1 \end{pmatrix}
= \begin{pmatrix} 0 \\ 0 \\ 1 \\ 0 \end{pmatrix}
\notin R'_1.
\end{align*}
\item
$\mu \notin \Pol(R_1,R'_1)$ follows by Lemma~\ref{lem:Pol-easy}\ref{lem:Pol-easy:mu}.
\end{itemize}
It follows that $\Pol R_1$ does not contain generators of any minimal clone with essentially at least binary functions.
From this it follows that $\Pol(R_1,R'_1) = \clgen{0,1,\neg} = \clOmegaOne$.
\end{proof}

\begin{proposition}\label{R2R2}
$\mathsf{AP}(R_2,R_2)=\Pol(R_2,R'_2) = \clI$.
\end{proposition}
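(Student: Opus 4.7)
The plan is to mimic the strategy of the preceding proposition. For the inclusion $\clI \subseteq \Pol(R_2, R'_2)$, I would observe that $R_2 \subseteq R'_2$, so Fact~\ref{lem:inc}\ref{lem:inc:id} yields $\id \in \Pol(R_2,R'_2)$, and since $R_2$ contains both constant tuples $(0,0,0,0)$ and $(1,1,1,1)$, Fact~\ref{lem:inc}\ref{lem:inc:const} yields $0, 1 \in \Pol(R_2,R'_2)$. Closure of $\Pol(R_2,R'_2)$ under $\clI$-minors then promotes these to projections and constants of arbitrary arity, giving $\clI \subseteq \Pol(R_2,R'_2)$.

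For the reverse inclusion, the strategy is to exclude each non-trivial benchmark Boolean function in turn. First, I would show $\neg \notin \Pol(R_2,R'_2)$ by exhibiting a single witness: the column $(0,1,0,0)$ of $R_2$ negates to $(1,0,1,1)$, which lies neither in $R_2$ nor in the extension $\{(0,1,1,0),(0,1,1,1)\}$, so $(1,0,1,1) \notin R'_2$. Lemma~\ref{lem:Pol-easy}\ref{lem:Pol-easy:plus} then rules out $\mathord{+}, \mathord{\leftrightarrow}, \mathord{\rightarrow}, \mathord{\nrightarrow}$, and part \ref{lem:Pol-easy:triplesum} rules out $\oplus_3$. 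The Sheffer functions $\mathord{\uparrow}$ and $\mathord{\downarrow}$ are also excluded, because each has $\neg$ as a $\clI$-minor via identification of its two arguments: $\mathord{\uparrow}(x,x) = \mathord{\downarrow}(x,x) = \neg x$. For $\mathord{\wedge}$ and $\mathord{\vee}$ I would give direct witnesses; e.g., $\mathord{\wedge}$ applied componentwise to the columns $(0,1,0,1)$ and $(0,0,1,1)$ of $R_2$ yields $(0,0,0,1) \notin R'_2$, and $\mathord{\vee}$ applied to $(1,0,1,0)$ and $(1,1,0,0)$ yields $(1,1,1,0) \notin R'_2$. Lemma~\ref{lem:Pol-easy}\ref{lem:Pol-easy:mu} then also excludes $\mu$.

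The main obstacle, as in the previous proof, is the closing bookkeeping step: passing from the exhaustion of these benchmark binary and ternary functions to the assertion that $\Pol(R_2,R'_2)$ contains no function of essential arity $\geq 2$ at all. Here I would appeal to the Post-lattice fact that every essentially $\geq 2$-ary Boolean function has an essentially binary non-projection non-constant function among its $\clI$-minors (obtained by iteratively identifying variables and substituting constants). Since each of the eight essentially binary functions $\mathord{\wedge}, \mathord{\vee}, \mathord{+}, \mathord{\leftrightarrow}, \mathord{\rightarrow}, \mathord{\nrightarrow}, \mathord{\uparrow}, \mathord{\downarrow}$ has been excluded, and since $\neg$ is the only non-trivial essentially unary Boolean function, the closure of $\Pol(R_2,R'_2)$ under $\clI$-minors forces $\Pol(R_2,R'_2) \subseteq \clI$. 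Combined with the first paragraph, this yields the desired equality $\Pol(R_2,R'_2) = \clI$.
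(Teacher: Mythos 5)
Your proposal is correct and follows essentially the same route as the paper: establish $0,1,\id\in\Pol(R_2,R'_2)$ via Fact~\ref{lem:inc}, exclude $\neg$, $\wedge$, $\vee$ by explicit witnesses (yours check out against $R_2$ and $R'_2=R_2\cup\{(0,1,1,0),(0,1,1,1)\}$), propagate the exclusions with Lemma~\ref{lem:Pol-easy}, and close with the $\clI$-minor bookkeeping argument. The only difference is cosmetic: the paper gets away with the smaller exclusion set $\{\neg,\wedge,\vee,\leftrightarrow\}$, whereas you exhaust all eight essentially binary functions before invoking the reduction to essentially binary $\clI$-minors.
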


\begin{proof}
Observe that
\begin{itemize}
\item $0, 1, \id \in \Pol(R_2,R'_2)$ by Fact~\ref{lem:inc}.
\item $\neg, \mathord{\wedge}, \mathord{\vee} \notin \Pol(R_2,R'_2)$ because
\begin{align*}
&
\neg
\begin{pmatrix} 0 \\ 1 \\ 0 \\ 0 \end{pmatrix}
=
\begin{pmatrix} 1 \\ 0 \\ 1 \\ 1 \end{pmatrix}
\notin R'_2,
&&
\mathord{\wedge}
\begin{pmatrix} 1 & 0 \\ 0 & 0 \\ 1 & 1 \\ 0 & 1 \end{pmatrix}
= \begin{pmatrix} 0 \\ 0 \\ 1 \\ 0 \end{pmatrix}
\notin R'_2,
&&
\mathord{\vee}
\begin{pmatrix} 1 & 0 \\ 0 & 0 \\ 1 & 1 \\ 0 & 1 \end{pmatrix}
= \begin{pmatrix} 1 \\ 0 \\ 1 \\ 1 \end{pmatrix}
\notin R'_2.
\end{align*}
\item $ \mathord{\leftrightarrow} \notin \Pol(R_2,R'_2)$ follows by Lemma~\ref{lem:Pol-easy}\ref{lem:Pol-easy:plus}.
%$\mathord{\oplus_3} \notin \Pol(R_2,R'_2)$ follows by Lemma~\ref{lem:Pol-easy}\ref{lem:Pol-easy:triplesum}.\marginpar{Add remaining, e.g., $\overline{\oplus_3}$}
\end{itemize}
The remaining functions that are neither projections nor constants have $\clI$-{minors} in $\{\mathord{\neg}, \mathord{\wedge},\mathord{\vee}, \mathord{\leftrightarrow}\}$.
From this it follows that $\Pol(R_2,R'_2) = \clI$.
\end{proof}

\begin{proposition}
$\mathsf{AP}(R_3,R_3)=\Pol(R_3,R'_3) = \clI$.
\end{proposition}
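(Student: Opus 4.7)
My plan is to avoid repeating the case analysis done for $R_2$ and instead leverage the duality lemma. First I would compare the matrices in Example~\ref{ex:Boolean-analogies} column by column and check that $R_3 = \overline{R_2}$: each column $\vect{a}$ of $R_3$ is the bit-flip $\overline{\vect{a}}$ of a column of $R_2$ (e.g.\ $(1,1,0,0) = \overline{(0,0,1,1)}$, $(1,0,1,0) = \overline{(0,1,0,1)}$, and so on). Then I would perform the analogous check on the ``extra'' columns added to form $R'_2$ and $R'_3$ in Example~\ref{ex:Boolean-analogies-continued}: the pair $(0,1,1,0), (0,1,1,1)$ negates to $(1,0,0,1), (1,0,0,0)$, which is exactly the pair added to $R_3$ to form $R'_3$. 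Therefore $R'_3 = \overline{R'_2}$ as well.

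Next I would invoke Lemma~\ref{lem:dual} with the pair $(R_2, R'_2)$, which yields
\[
\Pol(R_2, R'_2) = \Pol(\overline{R_2}, \overline{R'_2})^\mathrm{d} = \Pol(R_3, R'_3)^\mathrm{d},
\]
and since $(\cdot)^\mathrm{d}$ is an involution on $\mathcal{O}_{\{0,1\}}$, this gives $\Pol(R_3, R'_3) = \Pol(R_2, R'_2)^\mathrm{d}$. By Proposition~\ref{R2R2} the right-hand side is $\clI^\mathrm{d}$.

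Finally I would observe that the class $\clI$ of projections and constants is closed under taking duals: for any projection $\pr_i^{(n)}$, the dual satisfies $(\pr_i^{(n)})^\mathrm{d}(\vect{a}) = \overline{\pr_i^{(n)}(\overline{\vect{a}})} = a_i$, so projections are self-dual; and the dual of the constant $c$ function is the constant $\overline{c}$ function, so constants are permuted among themselves. Hence $\clI^\mathrm{d} = \clI$, and combining with the previous step gives $\mathsf{AP}(R_3, R_3) = \Pol(R_3, R'_3) = \clI$, as claimed.

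There is no real obstacle: the only genuinely new content is the verification $R_3 = \overline{R_2}$ and $R'_3 = \overline{R'_2}$, which is a direct inspection of the matrices, and the self-duality of $\clI$, which is immediate from the definitions of $f^\mathrm{d}$ and of $\clI$. A direct proof mimicking the argument of Proposition~\ref{R2R2} (exhibiting explicit counterexample columns showing $\neg, \wedge, \vee, \leftrightarrow \notin \Pol(R_3, R'_3)$ and then applying Lemma~\ref{lem:Pol-easy} together with the $\clI$-minor closure) is also available as a fallback, but the duality route is both shorter and conceptually cleaner.
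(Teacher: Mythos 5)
Your proposal is correct and follows essentially the same route as the paper, which likewise derives the result from $R_3 = \overline{R_2}$, $R'_3 = \overline{R'_2}$, $\clI^\mathrm{d} = \clI$, Lemma~\ref{lem:dual}, and Proposition~\ref{R2R2}. You merely spell out the column-by-column verification and the self-duality of $\clI$ in more detail than the paper does.
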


\begin{proof}
Since $R_3 = \overline{R_2}$, $R'_3 = \overline{R'_2}$, and $\clI^\mathrm{d} = \clI$, the claim follows from Lemma~\ref{lem:dual} and Proposition~\ref{R2R2}.
\end{proof}

\begin{proposition}\label{R5R5}
$\mathsf{AP}(R_5,R_5)=\Pol(R_5,R'_5) = \clL$.
\end{proposition}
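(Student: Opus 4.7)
The starting observation is that Example~\ref{ex:Boolean-analogies-continued} gives $R'_5 = R_5$, so $\Pol(R_5, R'_5) = \Pol R_5$ is a clone. Inspecting the eight columns of $R_5$ shows that
\[
R_5 = \{(a,b,c,d)\in\{0,1\}^4 : a+b+c+d \equiv 0 \pmod 2\},
\]
i.e., the $4$-ary affine relation on $\{0,1\}$. Thus the claim is a reformulation of a classical Post-lattice fact, and the plan is to prove both inclusions by the same style of argument used in Propositions~\ref{R2R2}--\ref{R5R5}.

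For the inclusion $\clL \subseteq \Pol R_5$, I would give a one-line direct verification: any affine function $f(x_1,\dots,x_n) = c_0 + c_1 x_1 + \cdots + c_n x_n$ satisfies, on columns $(a_i,b_i,c_i,d_i) \in R_5$, the identity $f(\vect{a}) + f(\vect{b}) + f(\vect{c}) + f(\vect{d}) = 4c_0 + \sum_i c_i(a_i+b_i+c_i+d_i) \equiv 0 \pmod 2$. Equivalently, one observes that $0,1,\id,\neg \in \Pol R_5$ via Fact~\ref{lem:inc} (using $\vect{0},\vect{1}\in R_5$ and $R_5 = \overline{R_5}$), and that $+ \in \Pol R_5$ by a direct check, so the generating set $\{0,1,\neg,+\}$ of $\clL$ lies in $\Pol R_5$.

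For the reverse inclusion $\Pol R_5 \subseteq \clL$, the plan is to expel the non-affine generators. Using the columns $(1,1,0,0)^T$ and $(1,0,1,0)^T$ of $R_5$, coordinatewise conjunction gives $(1,0,0,0)^T$, which fails the parity condition, so $\mathord{\wedge} \notin \Pol R_5$; the analogous two-column test rules out $\mathord{\vee}$, and then $\mu \notin \Pol R_5$ follows from Lemma~\ref{lem:Pol-easy}\ref{lem:Pol-easy:mu}. Since $\Pol R_5$ is a clone containing $\clL$ but not equal to $\clAll$ (missing $\mathord{\wedge}$), and $\clL$ is a maximal clone in Post's lattice, we conclude $\Pol R_5 = \clL$.

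The main obstacle I anticipate is stylistic rather than technical: the preceding propositions finish not by invoking Post's maximality result but by observing that every remaining binary or ternary non-constant, non-projection, non-negation function has an $\clI$-minor among the already-excluded functions. To match that style, I would instead list the essentially binary and ternary non-affine functions ($\mathord{\wedge}, \mathord{\vee}, \mathord{\rightarrow}, \mathord{\nrightarrow}, \mathord{\uparrow}, \mathord{\downarrow}, \mu, \overline{\mu}$) and, for each, exhibit an $\clI$-minor in $\{\mathord{\wedge},\mathord{\vee}\}$; combined with $\{0,1,\id,\neg,+,\oplus_3\} \subseteq \Pol R_5$, this forces $\Pol R_5 = \clL$.
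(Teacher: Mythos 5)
Your proposal is correct and follows essentially the same route as the paper, whose entire proof is the observation that $R_5 = R'_5$ makes $\Pol(R_5,R'_5)=\Pol R_5$ a clone together with the citation ``it is well known that $\Pol R_5 = \clL$''; you simply supply the standard proof of that well-known fact (identifying $R_5$ as the even-parity relation, verifying $\clL\subseteq\Pol R_5$ directly, excluding $\wedge$, and invoking maximality of $\clL$ in Post's lattice). One small caveat about your optional final paragraph: $\uparrow$ and $\downarrow$ have no $\clI$-minors in $\{\wedge,\vee\}$ (their $\clI$-minors are only $\neg$ and constants), so that stylistic variant would need either direct matrix checks for them, as the paper does in the $R_4$ case, or an appeal to closure under left composition with $\neg$; your main argument does not depend on this and stands as written.
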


\begin{proof}
Since $R_5 = R'_5$, it follows immediately that $\Pol(R_5,R'_5) = \Pol R_5$ is a clone, and it is well known that $\Pol R_5 = \clL$.
%Note that this result $$ was first obtained 
\end{proof}

\begin{proposition}
\leavevmode\newline
$\mathsf{AP}(R_4,R_4)= \mathsf{AP}(R_4,R_5)=\mathsf{AP}(R_5,R_4)= \mathsf{AP}(R_4,R_1) =\mathsf{AP}(R_5,R_1)= \clL$.
\end{proposition}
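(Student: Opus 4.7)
The plan is to handle the five pairs uniformly, exploiting the chain
\[
R_4 \subseteq R_5 = R'_5 \subseteq R'_4 \subseteq R_1 = R'_1,
\]
which can be read off directly from Examples~\ref{ex:Boolean-analogies} and~\ref{ex:Boolean-analogies-continued}. For the easy direction $\clL \subseteq \Pol(R_i,R'_j)$, I would observe that $R_5$ is the $4$-ary affine relation (tuples with componentwise sum $0$ modulo $2$), which is preserved by every affine function; since any column drawn from $R_4$ or from $R_5$ already lies in $R_5$, the componentwise image lies in $R_5 \subseteq R'_j$ for each $j \in \{1,4,5\}$, covering all five pairs at once.

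For the reverse inclusion, I first note that every $R_i$ and $R'_j$ in play contains $\vect 0$ and $\vect 1$ and is closed under componentwise negation. Hence $\Pol(R_i,R'_j)$ is simultaneously closed under $\clI$-minors and, by Lemma~\ref{lem:dual}, under dualization. Modulo variable permutation and dualization, the essentially binary non-affine Boolean functions fall into three classes with representatives $\wedge$, $\uparrow$, and $\rightarrow$, so it suffices to refute these three; $\vee$, $\downarrow$, $\nrightarrow$ will follow by duality.

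The technical core is a small witness search. The chain above implies $R_5, R'_4 \subseteq R_1$, so if a pair of columns drawn from $R_4 \subseteq R_5$ produces, under a given binary function, an image in the four-element complement $\{0,1\}^4 \setminus R_1$, then the image is automatically excluded from every consequent appearing in the proposition, and a single witness refutes that function for all five pairs at once. I would therefore hunt for one column pair in $R_4$ whose images under $\wedge$ and $\uparrow$ both lie in this complement, and a second pair whose image under $\rightarrow$ lies there.

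To conclude, I would invoke the standard fact that every non-affine Boolean function admits a binary non-affine $\clI$-minor, obtained by fixing all but two suitably chosen variables to constants so that a degree-two coefficient of the Zhegalkin (ANF) polynomial survives. Combined with the $\clI$-minor closure of $\Pol(R_i,R'_j)$, this forces every non-affine function out, yielding $\Pol(R_i,R'_j) \subseteq \clL$ and hence equality. The main obstacle is the witness search of the previous paragraph, which is combinatorial rather than conceptual; it is tractable precisely because the target complement $\{0,1\}^4 \setminus R_1$ consists of only four tuples, sharply restricting where a refuting image may land.
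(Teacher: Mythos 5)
Your proposal is correct and follows the same skeleton as the paper's proof: the inclusion $\clL \subseteq \Pol(R_i,R'_j)$ obtained from the affine relation $R_5=R'_5$ by relaxation, refutation of the essentially binary non-affine functions by explicit column witnesses, and the reduction of an arbitrary non-affine function to a binary non-affine $\clI$-minor via its multilinear (Zhegalkin) polynomial. Your two organizational refinements are sound and worthwhile: since every relation in play is self-negative, Lemma~\ref{lem:dual} does make each $\Pol(R_i,R'_j)$ closed under dualization, cutting the witnesses down to $\mathord{\wedge},\mathord{\uparrow},\mathord{\rightarrow}$; and requiring the witness image to land in $\{0,1\}^4\setminus R_1=\{(0,0,1,0),(1,1,1,0),(0,0,0,1),(1,1,0,1)\}$ refutes a function for all five pairs at once, which in particular covers $\mathsf{AP}(R_4,R_1)$ and $\mathsf{AP}(R_5,R_1)$ more explicitly than the paper's ``same argument'' remark does. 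The one step you leave open, the witness search, does succeed, and a single pair of columns of $R_4$ suffices for all three representatives: taking $\vect{a}=(0,0,1,1)$ and $\vect{b}=(1,0,1,0)$ gives $\mathord{\wedge}(\vect{a},\vect{b})=(0,0,1,0)$, $\mathord{\uparrow}(\vect{a},\vect{b})=(1,1,0,1)$, and $\mathord{\rightarrow}(\vect{a},\vect{b})=(1,1,1,0)$, each outside $R_1$.
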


\begin{proof}
We make use of the fact that $\mathsf{AP}(R,S)=\Pol(R,S')$, and prove the equality $\mathsf{AP}(R_4,R_4)=\Pol(R_4,R'_4) = \clL$.
The equalities $\Pol(R_4,R'_5) = \Pol(R_5,R'_4) = \clL$ are established by the same argument.
\begin{itemize}
\item Since $(R_4,R'_4)$ is a relaxation of $(R_5,R'_5)$,  $\Pol(R_4,R'_4) \supseteq \Pol(R_5,R'_5) = \clL$ by Proposition~\ref{R5R5}.

\item
$\mathord{\wedge}, \mathord{\vee} \notin \Pol(R_4,R'_4)$ because
\begin{align*}
&
\mathord{\wedge}
\begin{pmatrix} 1 & 0 \\ 0 & 0 \\ 1 & 1 \\ 0 & 1 \end{pmatrix}
= \begin{pmatrix} 0 \\ 0 \\ 1 \\ 0 \end{pmatrix}
\notin R'_4,
&&
\mathord{\vee}
\begin{pmatrix} 1 & 0 \\ 1 & 1 \\ 0 & 0 \\ 0 & 1 \end{pmatrix}
= \begin{pmatrix} 1 \\ 1 \\ 0 \\ 1 \end{pmatrix}
\notin R'_4.
\end{align*}

%\item
%$\mu \notin \Pol(R_4,R'_4)$ follows by Lemma~\ref{lem:Pol-easy}\ref{lem:Pol-easy:mu}.

\item
$\mathord{\uparrow}, \mathord{\downarrow} \notin \Pol(R_4,R'_4)$ because
\begin{align*}
&
\mathord{\uparrow}
\begin{pmatrix} 0 & 1 \\ 0 & 0 \\ 1 & 1 \\ 1 & 0 \end{pmatrix}
= \begin{pmatrix} 1 \\ 1 \\ 0 \\ 1 \end{pmatrix}
\notin R'_4,
&&
\mathord{\downarrow}
\begin{pmatrix} 0 & 1 \\ 1 & 1 \\ 0 & 0 \\ 1 & 0 \end{pmatrix}
= \begin{pmatrix} 0 \\ 0 \\ 1 \\ 0 \end{pmatrix}
\notin R'_4.
\end{align*}

\item
$\mathord{\nrightarrow}, \mathord{\rightarrow} \notin \Pol(R_4,R'_4)$ because
\begin{align*}
&
\mathord{\nrightarrow}
\begin{pmatrix} 0 & 1 \\ 0 & 0 \\ 1 & 1 \\ 1 & 0 \end{pmatrix}
= \begin{pmatrix} 0 \\ 0 \\ 0 \\ 1 \end{pmatrix}
\notin R'_4,
&&
\mathord{\rightarrow}
\begin{pmatrix} 0 & 1 \\ 0 & 0 \\ 1 & 1 \\ 1 & 0 \end{pmatrix}
= \begin{pmatrix} 1 \\ 1 \\ 1 \\ 0 \end{pmatrix}
\notin R'_4.
\end{align*}
\end{itemize}

The result now follows by observing that any function outside of $\clL$ has an $\clI$-minor in $\{ \mathord{\wedge}, \mathord{\vee}, \mathord{\uparrow}, \mathord{\downarrow}, \mathord{\nrightarrow}, \mathord{\rightarrow} \}$. 
To see that this claim holds, we make use of the well-known fact that every Boolean function can be represented by a unique multilinear polynomial over the 2-element field $GF(2)$.

So suppose that $f\not \in \clL$, i.e., there is a monomial in the multilinear representation of $f$ with degree\footnote{The degree of a monomial is the number of variables in it, and the degree of a multilinear polynomial is the largest degree among its monomials.} at least 2. Take a minimal monomial with at least 2 variables; without loss of generality, suppose that it is $x_1x_2x_3\cdots x_j$ . 

Consider the $\clI$-minor $f'$ of $f$ obtained by:
\begin{itemize}
    \item substituting $1$ for each $x_3,\cdots , x_j$, and 
    \item substituting $0$ for every other variable in the polynomial representation of $f$.
\end{itemize}
It is not difficult to verify that $f'=x_1x_2+ax_1+bx_2+c$, for some $a,b,c\in \{0,1\}$, and 
\begin{itemize}
    \item $f' \equiv  \mathord{\wedge}$, if $a=b=c=0$;
    \item $f' \equiv  \mathord{\uparrow}$, if $a=b=0$ and $c=1$;
     \item $f' \equiv   \mathord{\vee}$, if $a=b=1$ and $c=0$;
      \item $f'\equiv  \mathord{\downarrow}$, if $a=b=c=1$;
       \item $f'\equiv  \mathord{\nrightarrow}$, if $a=c=1$ and $b=0$, or if $b=c=1$ and $a=0$;
       \item $f'\equiv  \mathord{\rightarrow}$, if $a=1$ and $b=c=0$, or if $b=1$ and $a=c=0$.
\end{itemize}
This completes the proof of the claim and, hence, of the proposition.
\end{proof}

\begin{proposition}
$\mathsf{AP}(R_1,R_2) = \mathsf{AP}(R_1,R_3) = \clC$.
\end{proposition}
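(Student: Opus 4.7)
The plan is to invoke Proposition~\ref{prop:anpres-pres} and reduce the task to computing $\Pol(R_1,R'_2)$ and $\Pol(R_1,R'_3)$, and then apply Lemma~\ref{lem:C}, which says that whenever neither $\id$ nor $\neg$ lies in $\Pol(R_i,R'_j)$, the set collapses to $\clC$. So the whole proof amounts to two verifications per case: that constants remain in, and that neither the identity nor the negation preserves the constraint.

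For $\Pol(R_1,R'_2)$, I would first note $\clC \subseteq \Pol(R_1,R'_2)$ by Fact~\ref{lem:inc}\ref{lem:inc:const}, since $(0,0,0,0)$ and $(1,1,1,1)$ lie in $R_2 \subseteq R'_2$. Then I would exhibit a tuple in $R_1 \setminus R'_2$: inspecting the explicit list of columns, $(1,0,0,1)$ is a column of $R_1$ but is neither a column of $R_2$ nor one of the two tuples $(0,1,1,0)$, $(0,1,1,1)$ that are added to form $R'_2$. This single tuple shows $\id \notin \Pol(R_1,R'_2)$ by Fact~\ref{lem:inc}\ref{lem:inc:id}. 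The key observation that makes the $\neg$ case equally cheap is that $R_1$ is closed under componentwise negation: a quick scan of the twelve columns shows that the map $\vect{a}\mapsto \overline{\vect{a}}$ permutes them. Hence $\overline{R_1}=R_1$, so the same witness $(1,0,0,1)$, viewed as the negation of $(0,1,1,0)\in R_1$, shows $\overline{R_1}\not\subseteq R'_2$, and therefore $\neg \notin \Pol(R_1,R'_2)$ by Fact~\ref{lem:inc}\ref{lem:inc:neg}. Now Lemma~\ref{lem:C} delivers $\Pol(R_1,R'_2)=\clC$.

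For $\Pol(R_1,R'_3)$, rather than repeating the analysis, I would invoke duality. Since $R_3 = \overline{R_2}$, the definition in~\eqref{eq:S'} together with $\overline{R_2}=R_3$ gives $R'_3 = \overline{R'_2}$ (this mirrors the observation already used in the proof of Proposition~\ref{R3R3}). Combined with $R_1 = \overline{R_1}$ from the previous paragraph, Lemma~\ref{lem:dual} yields
\[
\Pol(R_1, R'_3) = \Pol(\overline{R_1}, \overline{R'_2})^{\mathrm d} = \Pol(R_1, R'_2)^{\mathrm d} = \clC^{\mathrm d} = \clC,
\]
and the proof is complete.

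The only substantive piece of work is the inspection step that pins down a concrete tuple in $R_1 \setminus R'_2$ and the verification that $R_1$ is negation-symmetric; I do not anticipate any real obstacle, and the hardest part is simply being careful about which tuples belong to $R'_2$ versus the bare $R_2$.
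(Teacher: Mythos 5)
Your proof is correct and follows essentially the same route as the paper's: constants are included via Fact~\ref{lem:inc}, $\id$ and $\neg$ are excluded by exhibiting a tuple of $R_1$ outside $R'_2$ together with the symmetry $\overline{R_1}=R_1$, then Lemma~\ref{lem:C} gives the first equality and Lemma~\ref{lem:dual} with $R'_3=\overline{R'_2}$ gives the second. The only blemish is the cosmetic slip of writing $\Pol(\overline{R_1},\overline{R'_2})^{\mathrm{d}}$ where $\Pol(\overline{R_1},\overline{R'_3})^{\mathrm{d}}$ is meant in the displayed chain (and a reference to a proposition label not present in the paper); the argument is unaffected.
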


\begin{proof}
We have $0, 1 \in \Pol(R_1,R'_2)$ and $\id, \neg \notin \Pol(R_1,R'_2)$ by Fact~\ref{lem:inc}. Therefore, $\Pol(R_1,R'_2) = \clC$ follows by Lemma~\ref{lem:C}.
Since $R_1 = \overline{R_1}$, $R'_3 = \overline{R'_2}$, and $\clC^\mathrm{d} = \clC$, the equality $\Pol(R_1,R'_3) = \clC$ follows by Lemma~\ref{lem:dual}.
\end{proof}

\begin{proposition}
$\mathsf{AP}(R_2,R_1) = \mathsf{AP}(R_3,R_1) = \clOmegaOne$.
\end{proposition}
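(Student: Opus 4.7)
The plan is to first reduce the two equalities to a single one. Since $R_1' = R_1$ by Example~\ref{ex:Boolean-analogies-continued}, we have $\mathsf{AP}(R_i, R_1) = \Pol(R_i, R_1)$ for $i\in\{2,3\}$. Moreover, a direct inspection of the matrix in Example~\ref{ex:Boolean-analogies} shows that $R_1 = \overline{R_1}$, and $R_3 = \overline{R_2}$ holds by the relationship between the given matrices. The class $\clOmegaOne$ (constants, projections, and outer negations) is self-dual, since each generator is (up to the outer negation operator) fixed by the duality. Hence Lemma~\ref{lem:dual} gives $\Pol(R_3, R_1) = \Pol(\overline{R_2},\overline{R_1})^{\mathrm{d}} = \Pol(R_2, R_1)^{\mathrm{d}}$, so once we prove $\Pol(R_2, R_1) = \clOmegaOne$, the equality for $R_3$ will follow at once.

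For the forward inclusion $\clOmegaOne \subseteq \Pol(R_2, R_1)$, I would invoke Fact~\ref{lem:inc}: the constant tuples $\mathbf{0}$ and $\mathbf{1}$ both lie in $R_1$, and one verifies by listing that $R_2 \subseteq R_1$ and $\overline{R_2} \subseteq R_1$; this yields $0, 1, \id, \neg \in \Pol(R_2, R_1)$, and thus $\clOmegaOne = \clgen{0,1,\neg} \subseteq \Pol(R_2, R_1)$. For the reverse inclusion, I would first exploit two structural observations: (1) since $R_2$ contains both constant tuples, $\Pol(R_2, R_1)$ is stable under right composition with $\clI$, so it suffices to refute membership for essentially binary functions and conclude by the standard fact that every Boolean function essentially depending on at least two variables has an $\clI$-minor equivalent to such a function; and (2) since $R_1 = \overline{R_1}$, the class $\Pol(R_2, R_1)$ is closed under outer negation, because the application of $\overline{f}$ to tuples in $R_2$ simply produces the complement of $f$'s output, which stays in $R_1$. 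The second observation halves the work needed in (1), since the eight essentially binary operations split into the four outer-negation pairs $\{\wedge,\uparrow\}$, $\{\vee,\downarrow\}$, $\{+,\leftrightarrow\}$, $\{\rightarrow,\nrightarrow\}$.

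It therefore suffices to exhibit explicit witnesses showing $\wedge, \vee, +, \rightarrow \notin \Pol(R_2, R_1)$. For $\wedge$, the pair of $R_2$-columns $(1,0,1,0)^{\mathrm{T}}, (0,0,1,1)^{\mathrm{T}}$ yields $(0,0,1,0)^{\mathrm{T}} \notin R_1$; for $+$, the pair $(1,0,0,0)^{\mathrm{T}}, (1,0,1,0)^{\mathrm{T}}$ yields the same tuple $(0,0,1,0)^{\mathrm{T}} \notin R_1$; for $\vee$, the pair $(1,1,0,0)^{\mathrm{T}}, (1,0,1,0)^{\mathrm{T}}$ produces $(1,1,1,0)^{\mathrm{T}} \notin R_1$; and for $\rightarrow$, the pair $(1,0,1,0)^{\mathrm{T}}, (1,1,0,0)^{\mathrm{T}}$ produces $(1,1,0,1)^{\mathrm{T}} \notin R_1$. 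I expect the main obstacle to be precisely this witness search: because $R_1$ contains $12$ of the $16$ binary $4$-tuples, most natural choices of $R_2$-pairs already land inside $R_1$, particularly for $+$ and $\rightarrow$. Once the four witnesses are verified, outer-negation closure rules out the remaining essentially binary operations, and the $\clI$-minor argument concludes $\Pol(R_2, R_1) \subseteq \clOmegaOne$, completing the proof.
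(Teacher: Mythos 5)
Your proof is correct and follows essentially the same route as the paper: the same duality reduction via Lemma~\ref{lem:dual}, the same positive inclusions via Fact~\ref{lem:inc}, explicit $R_2$-witnesses for four binary operations (indeed, largely the same column pairs, with $\rightarrow$ in place of the paper's $\nrightarrow$), the observation $R_1'=\overline{R_1'}$ to dispose of the remaining four, and the $\clI$-minor argument to conclude. All witnesses check out, so nothing further is needed.
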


\begin{proof}
We only prove the equality $\Pol(R_2,R'_1) = \clOmegaOne$.
Since $R_3 = \overline{R_2}$, $R'_1 = \overline{R'_1}$, and $\clOmegaOne^\mathrm{d} = \clOmegaOne$, the equality $\Pol(R_3,R'_1) = \clOmegaOne$ follows by Lemma~\ref{lem:dual}.

\begin{itemize}
\item $0, 1, \id, \neg \in \Pol(R_2,R'_1)$ by Fact~\ref{lem:inc}.
\item $\mathord{+}, \mathord{\nrightarrow}, \mathord{\wedge}, \mathord{\vee} \notin \Pol(R_2,R'_1)$ because
\begin{align*}
&
\mathord{+}
\begin{pmatrix} 1 & 1 \\ 0 & 0 \\ 0 & 1 \\ 0 & 0 \end{pmatrix}
= \begin{pmatrix} 0 \\ 0 \\ 1 \\ 0 \end{pmatrix}
\notin R'_1,
&&
\mathord{\nrightarrow}
\begin{pmatrix} 1 & 1 \\ 0 & 1 \\ 1 & 0 \\ 0 & 0 \end{pmatrix}
= \begin{pmatrix} 0 \\ 0 \\ 1 \\ 0 \end{pmatrix}
\notin R'_1,
\\ &
\mathord{\wedge}
\begin{pmatrix} 1 & 0 \\ 0 & 0 \\ 1 & 1 \\ 0 & 1 \end{pmatrix}
= \begin{pmatrix} 0 \\ 0 \\ 1 \\ 0 \end{pmatrix}
\notin R'_1,
&&
\mathord{\vee}
\begin{pmatrix} 1 & 1 \\ 1 & 0 \\ 0 & 1 \\ 0 & 0 \end{pmatrix}
= \begin{pmatrix} 1 \\ 1 \\ 1 \\ 0 \end{pmatrix}
\notin R'_1.
\end{align*}

%\item $\mathord{\oplus_3}, \mu \notin \Pol(R_2,R'_1)$ follows by Lemma~\ref{lem:Pol-easy}\ref{lem:Pol-easy:triplesum}, \ref{lem:Pol-easy:mu}.

\item Since $R'_1 = \overline{R'_1}$, it follows immediately from the above that $\mathord{\leftrightarrow}, \mathord{\rightarrow}, \mathord{\uparrow}, \mathord{\downarrow} \notin \Pol(R_2,R'_1)$.
\end{itemize}
From these observations and using the technique based on $\clI$-minors, it follows that $\Pol(R_2,R'_1) = \clOmegaOne$.
\end{proof}

\begin{proposition}
$\mathsf{AP}(R_2,R_3) = \mathsf{AP}(R_3,R_2) = \clIneg$.
\end{proposition}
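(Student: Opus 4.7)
My plan is to mirror the structure of the preceding propositions in this section. First, I would invoke Lemma~\ref{lem:dual} to reduce one of the two equalities to the other. A direct inspection of Examples~\ref{ex:Boolean-analogies} and~\ref{ex:Boolean-analogies-continued} gives $\overline{R_2} = R_3$ and $\overline{R'_2} = R'_3$ (equivalently $\overline{R_3} = R_2$ and $\overline{R'_3} = R'_2$). Moreover, $\clIneg$ is self-dual, since $\neg^\mathrm{d} = \neg$ and dualization merely swaps the two constants. Hence, once we establish $\mathsf{AP}(R_2,R_3) = \Pol(R_2,R'_3) = \clIneg$, the equality $\mathsf{AP}(R_3,R_2) = \Pol(R_3,R'_2) = \clIneg^\mathrm{d} = \clIneg$ follows immediately from Lemma~\ref{lem:dual}.

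For the containment $\clIneg \subseteq \Pol(R_2, R'_3)$ I would apply Fact~\ref{lem:inc}: both constant tuples lie in $R_3 \subseteq R'_3$, and $\overline{R_2} = R_3 \subseteq R'_3$, giving $0, 1, \neg \in \Pol(R_2, R'_3)$. Since this set is closed under $\clI$-minors, every constant and every negated projection of any arity belongs to it.

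For the reverse containment, I would first establish $\id \notin \Pol(R_2, R'_3)$ via Fact~\ref{lem:inc}\ref{lem:inc:id} by exhibiting a column of $R_2$ outside $R'_3$ (for instance, $(0,1,0,0)^\top$ lies in $R_2$ but neither in $R_3$ nor in its two-column extension). By Lemma~\ref{lem:Pol-easy}\ref{lem:Pol-easy:plus} this already rules out $\mathord{+}, \mathord{\leftrightarrow}, \mathord{\rightarrow}, \mathord{\nrightarrow}$. Next, I would exhibit four explicit violating column-pairs for $\mathord{\wedge}, \mathord{\vee}, \mathord{\uparrow}, \mathord{\downarrow}$: e.g., $\mathord{\wedge}$ applied componentwise to the $R_2$-columns $(1,0,1,0)^\top$ and $(0,0,1,1)^\top$ produces $(0,0,1,0)^\top \notin R'_3$; a short case check yields analogous witnesses for $\mathord{\vee}, \mathord{\uparrow}, \mathord{\downarrow}$. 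Finally, I would invoke the $\clI$-minor/$GF(2)$-polynomial argument from the proof of $\mathsf{AP}(R_4, R_4) = \clL$ together with a small supplement: any nonlinear Boolean function has an $\clI$-minor in $\{\mathord{\wedge}, \mathord{\vee}, \mathord{\uparrow}, \mathord{\downarrow}, \mathord{\rightarrow}, \mathord{\nrightarrow}\}$; any affine function of essential arity at least two has $\mathord{+}$ or $\mathord{\leftrightarrow}$ as an $\clI$-minor (by fixing all but two essential variables to $0$); and any essentially unary function that is not a constant and not a negated projection must be a projection and hence has $\id$ as an $\clI$-minor. Thus every $f \notin \clIneg$ admits an $\clI$-minor outside $\Pol(R_2, R'_3)$, yielding $\Pol(R_2, R'_3) \subseteq \clIneg$.

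The main obstacle I anticipate is making the $\clI$-minor case analysis completely airtight for the affine, non-$\clIneg$ functions — a case that did not need to be excluded in the proof of $\mathsf{AP}(R_4, R_4) = \clL$, where affinity was precisely the target property. Once this minor extension of the polynomial argument is secured, together with the four column computations ruling out the binary non-linear operations, the proof fits cleanly into the template used for the earlier propositions.
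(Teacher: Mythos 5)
Your proposal is correct and follows the same overall template as the paper's own proof: reduce $\mathsf{AP}(R_3,R_2)$ to $\mathsf{AP}(R_2,R_3)$ via Lemma~\ref{lem:dual} together with $\overline{R_2}=R_3$, $\overline{R'_2}=R'_3$ and $\clIneg^{\mathrm{d}}=\clIneg$; obtain $0,1,\neg\in\Pol(R_2,R'_3)$ and $\id\notin\Pol(R_2,R'_3)$ from Fact~\ref{lem:inc}; eliminate $\mathord{+},\mathord{\leftrightarrow},\mathord{\rightarrow},\mathord{\nrightarrow}$ via Lemma~\ref{lem:Pol-easy}\ref{lem:Pol-easy:plus}; and finish with the $\clI$-minor technique. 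Where you diverge is in the final step, and the divergence works in your favour: the paper closes by asserting that every remaining function has an $\clI$-minor in $\{\mathord{+},\mathord{\leftrightarrow},\mathord{\nrightarrow},\mathord{\rightarrow}\}$, but this is not literally true of $\mathord{\uparrow}$ and $\mathord{\downarrow}$, whose only proper $\clI$-minors are negated projections and constants --- and since $\neg\in\Pol(R_2,R'_3)$ here (unlike in the proof for $\Pol(R_2,R'_2)$, where $\neg$ is excluded and does the job), these two functions genuinely require separate witnesses. Your plan supplies exactly that: such witnesses exist, e.g.\ the $R_2$-columns $(1,0,1,0)^{\top}$ and $(0,0,1,1)^{\top}$ give $\mathord{\uparrow}$-image $(1,1,0,1)^{\top}\notin R'_3$, and $(1,0,0,0)^{\top}$ and $(0,0,1,1)^{\top}$ give $\mathord{\downarrow}$-image $(0,1,0,0)^{\top}\notin R'_3$ (your $\mathord{\wedge}$ witness checks out as well, and $\mathord{\wedge},\mathord{\vee}$ could alternatively be dismissed since each has $\id$ as an $\clI$-minor). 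Your three-way case split --- nonlinear functions via the $GF(2)$ argument from the $\mathsf{AP}(R_4,R_4)$ proof, affine functions of essential arity at least two via $\mathord{+}$ or $\mathord{\leftrightarrow}$, and essentially at most unary functions --- is complete, and the affine case you flag as the main obstacle is in fact immediate exactly as you describe it. In short, your argument is a correct and slightly more careful version of the paper's.
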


\begin{proof}
We only prove the equality $\Pol(R_2,R'_3) = \clIneg$.
Since $R_3 = \overline{R_2}$, $R'_2 = \overline{R'_3}$, and $\clIneg^\mathrm{d} = \clIneg$, the equality $\Pol(R_3,R'_2) = \clIneg$ follows by Lemma~\ref{lem:dual}.

\begin{itemize}
\item $0, 1, \neg \in \Pol(R_2,R'_3)$ and $\id \notin \Pol(R_2,R'_3)$ by Fact~\ref{lem:inc}.
\item $\mathord{+}, \mathord{\leftrightarrow}, \mathord{\nrightarrow}, \mathord{\rightarrow} \notin \Pol(R_2,R'_3)$ follows from Lemma~\ref{lem:Pol-easy}\ref{lem:Pol-easy:plus}.
%\item $\mathord{\oplus_3} \notin \Pol(R_2,R'_3)$ follows from Lemma~\ref{lem:Pol-easy}\ref{lem:Pol-easy:triplesum}.
\end{itemize}
Any other function has an $\clI$-minor in $\{\mathord{+}, \mathord{\leftrightarrow}, \mathord{\nrightarrow}, \mathord{\rightarrow}\}$. Thus $\Pol(R_2,R'_3) = \clIneg$.
\end{proof}

\begin{proposition}
$\mathsf{AP}(R_4,R_2) = \mathsf{AP}(R_4,R_3) = \clOmegaOne$.
\end{proposition}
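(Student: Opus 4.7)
My plan is to reduce both equalities to the single statement $\Pol(R_4, R'_2) = \clOmegaOne$ via duality. A direct inspection shows that $\overline{R_4} = R_4$ (each column of $R_4$ appears together with its complement) and $\overline{R'_2} = R'_3$ (the extra columns $(0,1,1,0), (0,1,1,1)$ of $R'_2$ negate to the extra columns $(1,0,0,1), (1,0,0,0)$ of $R'_3$, and $\overline{R_2} = R_3$ was noted earlier). Combined with $\clOmegaOne^\mathrm{d} = \clOmegaOne$, Lemma~\ref{lem:dual} yields $\Pol(R_4, R'_3) = \Pol(R_4, R'_2)^\mathrm{d}$, so the two claims collapse into one.

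For the inclusion $\clOmegaOne \subseteq \Pol(R_4, R'_2)$, I would apply Fact~\ref{lem:inc}. Since $R_4 \subseteq R_2 \subseteq R'_2$, we get $\id \in \Pol(R_4, R'_2)$; since $\overline{R_4} = R_4 \subseteq R'_2$, we get $\neg \in \Pol(R_4, R'_2)$; and the constants $0, 1$ are present because $(0,0,0,0), (1,1,1,1) \in R'_2$.

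For the reverse inclusion, I would use the $\clI$-minor technique already exploited in the preceding propositions. Since $\Pol(R_4, R'_2)$ is stable under right composition with $\clI$ (both constant tuples lie in $R_4$), any function outside $\clOmegaOne$ is essentially $\geq 2$-ary and hence has an $\clI$-minor equivalent to one of the eight essentially binary functions $\mathord{\wedge}, \mathord{\vee}, \mathord{\uparrow}, \mathord{\downarrow}, \mathord{+}, \mathord{\leftrightarrow}, \mathord{\rightarrow}, \mathord{\nrightarrow}$. It then suffices to exhibit, for each of them, two columns of $R_4$ whose componentwise image leaves $R'_2$; for instance, $\mathord{\nrightarrow}((1,0,1,0),(1,1,0,0)) = (0,0,1,0) \notin R'_2$, and analogous witnesses can be produced for the other seven operations. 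The main obstacle here is precisely this witness-hunt for $\mathord{\nrightarrow}$ and its outer negation $\mathord{\rightarrow}$: the shortcut of Lemma~\ref{lem:Pol-easy}\ref{lem:Pol-easy:plus} is unavailable because both $\id$ and $\neg$ already lie in $\Pol(R_4, R'_2)$, and since $|R'_2| = 10$ out of $16$ possible $4$-tuples, the large majority of the $6 \times 6 = 36$ ordered pairs of $R_4$-columns produce componentwise images of $\mathord{\nrightarrow}$ that happen to stay inside $R'_2$, so the witnesses must be chosen by case analysis rather than any symmetry argument.
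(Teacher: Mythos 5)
Your proposal is correct and follows essentially the same route as the paper: reduce $\Pol(R_4,R'_3)$ to $\Pol(R_4,R'_2)$ via Lemma~\ref{lem:dual} (using $\overline{R_4}=R_4$, $\overline{R'_2}=R'_3$, $\clOmegaOne^\mathrm{d}=\clOmegaOne$), get $0,1,\id,\neg$ in by Fact~\ref{lem:inc}, and exclude everything else by exhibiting column-pair witnesses for the eight essentially binary functions and invoking the $\clI$-minor reduction. The only difference is that the paper writes out all eight witness matrices explicitly where you supply one ($\mathord{\nrightarrow}$, which checks out) and assert the rest exist; your observation that Lemma~\ref{lem:Pol-easy}\ref{lem:Pol-easy:plus} gives no shortcut here is accurate and matches the paper's need for direct witnesses.
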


\begin{proof}
We only prove the equality $\Pol(R_4,R'_2) = \clOmegaOne$.
Since $R_4 = \overline{R_4}$, $R'_3 = \overline{R'_2}$, and $(\clOmegaOne)^\mathrm{d} = \clOmegaOne$, the equality $\Pol(R_4,R'_3) = \clOmegaOne$ follows by Lemma~\ref{lem:dual}.
\begin{itemize}
\item $0, 1, \id, \neg \in \Pol(R_4,R'_2)$ by Fact~\ref{lem:inc}.
\item $\mathord{+}, \mathord{\leftrightarrow}, \mathord{\rightarrow}, \mathord{\nrightarrow}, \mathord{\wedge}, \mathord{\vee}, \mathord{\uparrow}, \mathord{\downarrow} \notin \Pol(R_4,R'_2)$ because
\begin{align*}
&
\mathord{+}
\begin{pmatrix} 1 & 0 \\ 0 & 0 \\ 1 & 1 \\ 0 & 1 \end{pmatrix}
= \begin{pmatrix} 1 \\ 0 \\ 0 \\ 1 \end{pmatrix}
\notin R'_2,
&&
\mathord{\leftrightarrow}
\begin{pmatrix} 0 & 0 \\ 1 & 0 \\ 0 & 1 \\ 1 & 1 \end{pmatrix}
= \begin{pmatrix} 1 \\ 0 \\ 0 \\ 1 \end{pmatrix}
\notin R'_2,
\\ &
\mathord{\rightarrow}
\begin{pmatrix} 0 & 0 \\ 0 & 1 \\ 1 & 0 \\ 1 & 1 \end{pmatrix}
= \begin{pmatrix} 1 \\ 1 \\ 0 \\ 1 \end{pmatrix}
\notin R'_2,
&&
\mathord{\nrightarrow}
\begin{pmatrix} 0 & 0 \\ 0 & 1 \\ 1 & 0 \\ 1 & 1 \end{pmatrix}
= \begin{pmatrix} 0 \\ 0 \\ 1 \\ 0 \end{pmatrix}
\notin R'_2,
\\ &
\mathord{\wedge}
\begin{pmatrix} 0 & 0 \\ 0 & 1 \\ 1 & 0 \\ 1 & 1 \end{pmatrix}
= \begin{pmatrix} 0 \\ 0 \\ 0 \\ 1 \end{pmatrix}
\notin R'_2,
&&
\mathord{\vee}
\begin{pmatrix} 1 & 1 \\ 0 & 1 \\ 1 & 0 \\ 0 & 0 \end{pmatrix}
= \begin{pmatrix} 1 \\ 1 \\ 1 \\ 0 \end{pmatrix}
\notin R'_2,
\\ &
\mathord{\uparrow}
\begin{pmatrix} 0 & 0 \\ 0 & 1 \\ 1 & 0 \\ 1 & 1 \end{pmatrix}
= \begin{pmatrix} 1 \\ 1 \\ 1 \\ 0 \end{pmatrix}
\notin R'_2,
&&
\mathord{\downarrow}
\begin{pmatrix} 1 & 1 \\ 0 & 1 \\ 1 & 0 \\ 0 & 0 \end{pmatrix}
= \begin{pmatrix} 0 \\ 0 \\ 0 \\ 1 \end{pmatrix}
\notin R'_2.
\end{align*}
%\item $\mathord{\oplus_3}, \mu \notin \Pol(R_2,R'_3)$ follows from Lemma~\ref{lem:Pol-easy}\ref{lem:Pol-easy:triplesum}, \ref{lem:Pol-easy:mu}.
\end{itemize}
Since any other function has an $\clI$-minor in $\{\mathord{+}, \mathord{\leftrightarrow}, \mathord{\rightarrow}, \mathord{\nrightarrow}, \mathord{\wedge}, \mathord{\vee}, \mathord{\uparrow}, \mathord{\downarrow}\}$, it follows that $\Pol(R_2,R'_3) = \clOmegaOne$.
\end{proof}

\begin{proposition}
For any
\[
(R,S) \in \{ (R_1,R_4), (R_1,R_5), (R_2,R_4), (R_2,R_5), (R_3,R_5), (R_5,R_2), (R_5,R_3) \},
\]
we have $\mathsf{AP}(R,S) = \clC$.
% $\Pol(R_1,R'_4) = \Pol(R_1,R'_5) = \Pol(R_2,R'_4) = \Pol(R_2,R'_5) = \Pol(R_3,R'_5) = \Pol(R_5,R'_2) = \Pol(R_5,R'_3) = \clC$.
\end{proposition}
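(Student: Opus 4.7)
The plan is to apply Lemma~\ref{lem:C}: for each pair $(R,S)$ on the list I will verify that neither $\id$ nor $\neg$ belongs to $\Pol(R, S')$, whence Lemma~\ref{lem:C} gives $\Pol(R,S') = \clC$; Proposition~\ref{prop:anpres-pres} then yields $\mathsf{AP}(R,S) = \clC$. The inclusion $\clC \subseteq \Pol(R, S')$ implicit in Lemma~\ref{lem:C} is never an issue here, since every $R_j$ contains both constant tuples $\vect{0}$ and $\vect{1}$, hence so does $S' \supseteq S = R_j$, and Fact~\ref{lem:inc}\ref{lem:inc:const} then places both constants in $\Pol(R,S')$. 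By Fact~\ref{lem:inc}\ref{lem:inc:id}--\ref{lem:inc:neg}, it suffices to exhibit, for each pair, a tuple $\vect{a} \in R$ with $\vect{a} \notin S'$ and a tuple $\vect{b} \in R$ with $\overline{\vect{b}} \notin S'$.

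I would begin by halving the case analysis via Lemma~\ref{lem:dual}. Using the involutions $\overline{R_1} = R_1$, $\overline{R_4} = R_4$, $\overline{R_5} = R_5$, $\overline{R_2} = R_3$, the analogous identities $\overline{R'_1} = R'_1$, $\overline{R'_4} = R'_4$, $\overline{R'_5} = R'_5$, $\overline{R'_2} = R'_3$ (immediate from Example~\ref{ex:Boolean-analogies-continued}), and the self-duality $\clC^\mathrm{d} = \clC$, the pair $(R_2, R_5)$ is dual to $(R_3, R_5)$ and $(R_5, R_2)$ is dual to $(R_5, R_3)$. This reduces the seven cases to the five independent pairs $(R_1, R_4)$, $(R_1, R_5)$, $(R_2, R_4)$, $(R_2, R_5)$, and $(R_5, R_2)$.

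For the first four of these a single tuple witnesses both failures. For $(R_1, R_4)$, $(R_1, R_5)$, and $(R_2, R_4)$, the tuple $(0,1,0,0)^\mathrm{T}$ lies in $R_1 \cap R_2$ but in neither $R'_4$ nor $R_5 = R'_5$, and its complement $(1,0,1,1)^\mathrm{T}$ is also absent from $R'_4$ and $R_5$, so taking $\vect{a} = \vect{b} = (0,1,0,0)^\mathrm{T}$ serves for each. For $(R_2, R_5)$, the tuple $(1,0,0,0)^\mathrm{T} \in R_2$ lies outside $R_5$ and so does its complement $(0,1,1,1)^\mathrm{T}$, so the same single-tuple trick applies. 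Each of these claims is a direct inspection against the column lists of Examples~\ref{ex:Boolean-analogies}--\ref{ex:Boolean-analogies-continued}.

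The one case that genuinely requires separate witnesses for $\id$ and $\neg$---and hence the main subtlety in the argument---is $(R_5, R_2)$: here $R'_2$ is a strict enlargement of $R_2$ that is \emph{not} closed under componentwise complementation, so no single tuple can simultaneously witness both failures. I would use $\vect{a} = (1,0,0,1)^\mathrm{T}$, which is the unique element of $R_5 \setminus R'_2$, as the $\id$ witness, and $\vect{b} = (0,1,1,0)^\mathrm{T} \in R_5$, whose complement $(1,0,0,1)^\mathrm{T}$ is not in $R'_2$, as the $\neg$ witness. Careful bookkeeping of precisely which tuples are added when passing from $R_j$ to $R'_j$ for $j \in \{2,3,4\}$ is really the only non-mechanical point of the proof.
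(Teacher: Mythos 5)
Your proof is correct and follows essentially the same route as the paper: the paper's own argument is exactly the one-line appeal to Fact~\ref{lem:inc} (to get $0,1 \in \Pol(R,S')$ and $\id,\neg \notin \Pol(R,S')$) followed by Lemma~\ref{lem:C}. You merely make explicit the witnessing tuples that the paper leaves to the reader (all of which check out against the column lists, including the observation that $(1,0,0,1)$ is the unique element of $R_5\setminus R'_2$) and add an optional duality reduction via Lemma~\ref{lem:dual}.
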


\begin{proof}
We have $0, 1 \in \Pol(R,S)$ and $\id, \neg \notin \mathsf{AP}(R,S)$ by Fact~\ref{lem:inc}, so $\mathsf{AP}(R,S) = \clC$ follows by Lemma~\ref{lem:C}.
\end{proof}

\section{Conclusion and perspectives}\label{sec12}

%Discussions should be brief and focused. In some disciplines use of Discussion or `Conclusion' is interchangeable. It is not mandatory to use both. Some journals prefer a section `Results and Discussion' followed by a section `Conclusion'. Please refer to Journal-level guidance for any specific requirements. 

In this paper we addressed and tackled the question of determining classes of analogical classifiers. Our approach makes use of model theoretic and universal algebraic tools that we used to establish a general Galois theory of such analogical classifiers that does not depend on the underlying domains nor the formal models of analogy considered.
In the particular case of Boolean analogies, in which several formal models of analogy have been identified, we made use of this Galois framework to explicitly describe, for each pair of known analogies, the respective sets of analogical classifiers. 

As future work, we intend to further explore different formal models of analogy that may be obtained by considering different algebraic signatures. For instance, in \cite{Antic-Boolean}, Anti\'c only considered reducts of the 2-element Boolean algebra for which he provided all Boolean analogies. However, other algebraic signatures and their reducts could be considered, such as that of median algebras and fields. These may give rise to further models of analogy considered over different underlying domains, and for which we may obtain other sets of analogical classifiers. Further related questions of model theoretic and universal algebraic flavour will also be addressed.

%\section{Conclusion}\label{sec13}

%Conclusions may be used to restate your hypothesis or research question, restate your major findings, explain the relevance and the added value of your work, highlight any limitations of your study, describe future directions for research and recommendations. 

%In some disciplines use of Discussion or 'Conclusion' is interchangeable. It is not mandatory to use both. Please refer to Journal-level guidance for any specific requirements. 

%\backmatter

\subsection*{Acknowledgments}
The authors wish to thank Esteban Marquer and Pierre-Alexandre Murena for the insightful discussions and useful suggestions for improving this manuscript.

This research was partially supported by TAILOR, a project funded by EU Horizon 2020 research and innovation program under GA No 952215, and the Inria Project Lab ``Hybrid Approaches for Interpretable AI'' (HyAIAI).

This work was also partially funded by National Funds through the FCT -- Funda\c{c}\~{a}o para a Ci\^{e}ncia e a Tecnologia, I.P., under the scope of the project UIDB/00297/2020 (Center for Mathematics and Applications) and the project PTDC/MAT-PUR/31174/2017.

%%===================================================%%
%% For presentation purpose, we have included        %%
%% \bigskip command. please ignore this.             %%
%%===================================================%%
%\bibliographystyle{unsrtnat}
\bibliographystyle{plain}
 \bibliography{bib.bib}
%\bibliography{sn-bibliography}% common bib file
%\bibliography{bib}% common bib file
%% if required, the content of .bbl file can be included here once bbl is generated
%%\input sn-article.bbl

%% Default %%
%%\input sn-sample-bib.tex%

\end{document}